\documentclass{article}

\PassOptionsToPackage{numbers, compress}{natbib}

\usepackage[preprint]{neurips_2026}

\usepackage[utf8]{inputenc}
\usepackage[T1]{fontenc}
\usepackage{microtype}
\usepackage{graphicx}
\usepackage{subcaption}
\usepackage{booktabs}
\usepackage{nicefrac}
\usepackage{url}

\usepackage{hyperref}

\usepackage{algorithm}
\usepackage{algorithmic}

\usepackage{amsmath}
\newcommand{\RETURN}{\STATE \textbf{return} }
\usepackage{amssymb}
\usepackage{amsfonts}
\usepackage{mathtools}
\usepackage{amsthm}
\usepackage[table]{xcolor}
\definecolor{HiRed}{RGB}{251,233,231}   
\usepackage{multirow}
\usepackage{pifont}
\usepackage{csquotes}
\usepackage{siunitx}

\usepackage[capitalize,noabbrev]{cleveref}

\theoremstyle{plain}
\newtheorem{theorem}{Theorem}[section]

\newtheorem{lemma}[theorem]{Lemma}

\theoremstyle{definition}

\theoremstyle{remark}

\newcommand{\cmark}{\ding{51}} 
\newcommand{\xmark}{\ding{55}}

\usepackage[textsize=tiny]{todonotes}

\title{Closed-Loop Bidirectional Prompting for Adversarial Robustness of Vision Language Models}

\author{%
  Xiao Liu$^{*\,1,2}$\qquad
  Jiaxiang Liu$^{*\,2}$\qquad
  Boci Peng$^{3}$\\[2pt]
  \bf Boren Hu$^{4}$\qquad
  Yusong Wang$^{5}$\qquad
  Xiwen Chen$^{6}$\\[2pt]
  \bf Prayag Tiwari$^{7}$\qquad
  Liming Zhang$^{\dagger\,1}$\qquad
  Mingkun Xu$^{\dagger\,2}$\\[8pt]
  $^{1}$University of Macau\quad
  $^{2}$Guangdong Institute of Intelligence Science and Technology\\
  $^{3}$Peking University\quad
  $^{4}$Independent Researcher\quad
  $^{5}$Institute of Science Tokyo\\
  $^{6}$Morgan Stanley\quad
  $^{7}$Halmstad University\\[4pt]
  Correspondence: \texttt{yc57995@um.edu.mo}, \texttt{mingkunxu@gdiist.cn}
}

\begin{document}

\maketitle
\renewcommand{\thefootnote}{\fnsymbol{footnote}}
\footnotetext[1]{Equal contribution.}
\footnotetext[2]{Corresponding author.}

\begin{abstract}
Vision Language Models adapt well to downstream tasks but are highly vulnerable to adversarial perturbations that disrupt cross-modal semantic alignment.
Existing defenses are largely unidirectional or structural, failing to exploit bidirectional cross-modal complementarity and instance-wise adaptive protection.
To overcome the limitations of unidirectional and static defenses in adversarial settings, we propose Closed-Loop Bidirectional Prompting, casting robust adaptation as cross-modal agreement recovery via a dynamic feedback loop on frozen encoders.
A Semantic Anchor is introduced as a stable prior to constrain cyclic updates and mitigate perturbation-induced feature corruption.
Through anchor-based bootstrapping, textual semantics denoise visual representations, while the refined visuals enable instance-adaptive prompt updating, yielding a rectified and robust consensus.
Extensive evaluations across 11 datasets validate state-of-the-art robustness and strong base-to-new generalization, while maintaining a favorable trade-off between computational cost and accuracy.
\end{abstract}

\section{Introduction}
\label{sec:intro}

Vision Language Models (VLMs)~\cite{radford2021learning, liu2023visual, kim2021vilt, yao2021filip, li2022blip} have advanced representation learning by aligning visual and textual modalities in a joint embedding space through self-supervised pre-training~\cite{caron2021emerging, grill2020bootstrap, chen2020simple}. The resulting cross-modal alignment supports strong generalisation across a range of tasks, from classification~\cite{wang2023improving} to dense prediction such as segmentation~\cite{zhou2022extract} and object detection~\cite{zhang2023simple}, and underpins a growing set of real-world applications~\cite{zhang2025multimodal,liu2025kpl}.

The same tight visual--textual coupling that drives this performance, however, can also be a source of fragility~\cite{malik2025robust}: small input perturbations can disturb the alignment and propagate to the output. The concern is amplified in deployment, since most widely used VLMs are built on publicly released foundations such as CLIP, granting potential adversaries white-box access to the encoder and making gradient-based attacks readily available~\cite{kong2024patch, madry2017towards, carlini2017towards}. These factors motivate a closer examination of VLM robustness under adversarial settings.

Defences for VLMs fall broadly into training-time and test-time paradigms, and existing methods within each paradigm have complementary limitations. \emph{Adversarial Fine-Tuning} (AFT)~\cite{mao2022understanding, schlarmann2024robust, wang2024pre} adversarially updates the visual encoder; while effective on benchmarks such as ImageNet, it is computationally expensive and can distort the pre-trained feature manifold, as evidenced by the Centered Kernel Alignment (CKA) drift relative to the original CLIP encoder reported in Appendix~\ref{sec:cka_drift} and Figure~\ref{figs:cka_drift}, which sometimes hurts clean-data generalisation. \emph{Adversarial Prompt Tuning} (APT)~\cite{li2024one, zhou2022learning, zhang2024adversarial} avoids modifying the backbone by optimising a small set of soft prompts, but the resulting prompts are static and shared across all inputs, which constrains adaptation to instance-specific perturbations and tends to overfit under few-shot supervision. \emph{Test-time defences} avoid this overhead by adapting per-instance during inference~\cite{xing2025clip, liu2025self}, but most focus on attack-side characteristics (e.g., perturbation magnitude) rather than the underlying cross-modal alignment.

A common thread across these paradigms is that they adjust a single modality—either the image branch (Figure~\ref{fig:advvlp}) or the text branch—while treating the other as fixed context. Even APT methods that bridge modalities typically do so as a unidirectional projection (Figure~\ref{fig:maple}), without an iterative exchange that lets each branch correct the other. Since adversarial attacks on VLMs are designed precisely to \emph{decouple} the two branches, a defence that performs \emph{bidirectional} recovery---where visual and textual features actively interact to re-establish their alignment---is a natural fit for the threat model.

Building on this observation, we propose \textbf{Closed-Loop Bidirectional Prompting (CLBP)}, a defence that establishes a feedback mechanism between the two branches while keeping the pre-trained encoders frozen (Figure~\ref{fig:clbp}). CLBP couples two lightweight adapters. \emph{Text-to-Vision (T2V) denoising}: a T2V adapter consumes the current text embeddings and emits visual prompts that are prepended to the image patches, so that adversarial noise in the visual feature is filtered through cross-attention against the relatively stable language signal. Unlike VPT~\cite{jia2022visual}, which inserts static learnable parameters into every block of the vision encoder, our visual prompts are generated dynamically per image. \emph{Vision-to-Text (V2T) refinement}: a V2T adapter (in the spirit of CoCoOp~\cite{zhou2022conditional}) maps the resulting visual feature back into a per-instance shift on the text prompt, yielding a textual description that is tied to the specific image rather than shared across the dataset.

Iterating these two updates alone, however, risks error accumulation: noise in one branch can propagate to the other, and the iterates may drift away from the pre-trained manifold. We therefore introduce a \emph{Semantic Anchor}: the loop is initialised, and implicitly constrained, by the embedding of a generic class template (a \emph{Semantic Bootstrapping} step). Together with a multi-view aggregation module that suppresses per-augmentation sampling noise, the anchor combines the stability of the pre-trained prior with the flexibility of per-instance adaptation~\cite{liu2025self, mao2022understanding, sheng2025r, zanella2024test}.

Our main contributions are as follows.
\begin{itemize}
    \item We identify the \emph{independent treatment of modalities} as a common limitation of existing VLM defences, and propose CLBP—a bidirectional feedback loop in which visual and textual features mutually condition and correct each other at inference time, with frozen pre-trained encoders.

    \item We complement the loop with a \emph{Semantic Anchor} that initialises and constrains the iteration to remain near the pre-trained semantic manifold, and we analyse the resulting fixed-point dynamics: a single bidirectional update is provably and empirically already close to the loop's fixed point.

    \item Across 11 datasets and three evaluation regimes (cross-dataset zero-shot, few-shot adaptation, and base-to-new generalisation), CLBP improves robustness over representative AFT, APT, and test-time baselines while preserving the clean accuracy of the underlying CLIP backbone.
\end{itemize}

\begin{figure}[t]
\centering

\begin{subfigure}{0.32\textwidth}
  \centering
  \includegraphics[page=1, width=\linewidth, keepaspectratio]{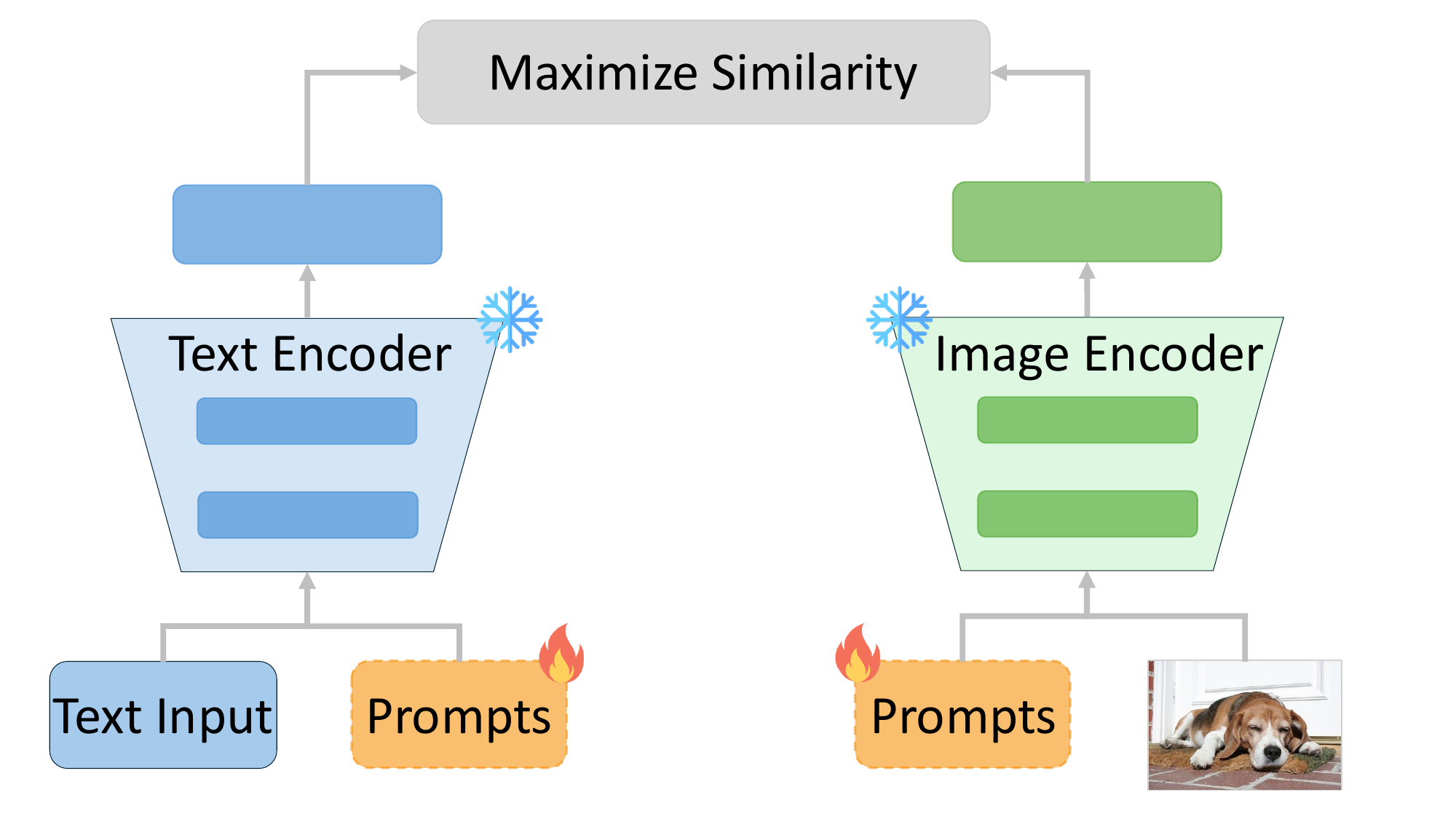}
  \caption{AdvVLP (Independent)}
  \label{fig:advvlp}
\end{subfigure}
\hfill 
\begin{subfigure}{0.32\textwidth}
  \centering
  \includegraphics[page=2, width=\linewidth, keepaspectratio]{figs/architecture.pdf}
  \caption{AdvMaPLe (Multi-modal)}
  \label{fig:maple}
\end{subfigure}
\hfill 
\begin{subfigure}{0.32\textwidth}
  \centering
  \includegraphics[page=3, width=\linewidth, keepaspectratio]{figs/architecture.pdf}
  \caption{CLBP (Cross-modal, Ours)}
  \label{fig:clbp}
\end{subfigure}
\vspace{1em}

\caption{\textbf{Comparison of prompt-learning paradigms.}
(a) \textbf{AdvVLP} inserts independent prompts in each modality with no cross-modal interaction (prone to overfitting).
(b) \textbf{AdvMaPLe} adds a unidirectional text-to-vision projection but remains static, lacking visual feedback for instance-specific adaptation.
(c) Our \textbf{CLBP} introduces a dynamic cross-modal feedback loop: lightweight networks align features across modalities to generate per-image defensive signals while keeping the pre-trained encoder frozen.}
\label{fig:comparison}
\vspace{-1em}
\end{figure}

\section{Related Work}
\label{sec:rw}

\textbf{Adversarial Vulnerability of Vision Language Models.}
Despite their impressive zero-shot capabilities, VLMs are sensitive to adversarial perturbations~\cite{mao2022understanding}. In contrast to attacks on uni-modal classifiers that aim to cross a decision boundary~\cite{goodfellow2014explaining}, attacks on VLMs typically introduce imperceptible noise that maximises the semantic distance between matched image--text pairs. Standard gradient-based attacks such as Projected Gradient Descent (PGD)~\cite{madry2017towards} and ensemble frameworks like AutoAttack~\cite{croce2020reliable} are routinely used to exploit white-box or gray-box access to the encoder, motivating defences that mitigate adversarial threats without compromising the model's generalisation.

\textbf{Training-Based Defence.}
Training-based defences for VLMs largely fall into two categories.
\textit{Adversarial Fine-Tuning} updates the visual encoder under an adversarial objective: TeCoA~\cite{mao2022understanding} aligns adversarial visual features with clean text embeddings, and FARE~\cite{schlarmann2024robust} minimises the distributional divergence between clean and adversarial samples. Both are effective on benchmarks such as ImageNet-1k but are computationally heavy and risk distorting the pre-trained feature space, which can hurt downstream performance.
\textit{Adversarial Prompt Tuning (APT)}~\cite{li2024one, zhou2022learning, zhang2024adversarial} sidesteps this cost by optimising a small set of soft prompts while keeping the backbone frozen. Early variants—AdvVP~\cite{mao2022understanding}, AdvPT~\cite{zhang2024adversarial} and AdvVLP~\cite{zhou2024few}—use independent prompts for each modality and largely ignore cross-modal correlations (Figure~\ref{fig:advvlp}). Subsequent works bridge the two branches via a unidirectional projection (Figure~\ref{fig:maple}): AdvMaPLe~\cite{zhou2024few}, built on MaPLe~\cite{khattak2023maple}, projects text prompts into the visual branch, while FAP inverts the direction. These methods, however, still produce static, instance-agnostic prompts and lack a mutual correction loop, which limits their adaptation to diverse adversarial patterns under few-shot supervision.

\textbf{Test-Time Defence.}
Test-time defences operate during inference and adapt per-instance without retraining. Test-Time Counterattack (TTC)~\cite{xing2025clip} exploits the anomalous stability of adversarial examples under secondary noise: it identifies the attack via noise injection and optimises a counter-perturbation that maximises feature deviation from the adversarial input. Robust Test-Time Prompt Tuning (R-TPT)~\cite{sheng2025r} attenuates perturbations through multi-view consistency and refines predictions via point-wise entropy minimisation. Self-calibrated Consistency (SCC)~\cite{liu2025self} extends this line by addressing semantic drift and view sensitivity through semantic and spatial consistency constraints. These methods all operate on the visual side and rely on consistency as the supervisory signal; the textual prototypes that determine cross-modal alignment~\cite{radford2021learning} remain unchanged.

\begin{figure}[t] 
  \centering
  \includegraphics[page=4, width=0.9\linewidth]{figs/architecture.pdf}
  
\caption{\textbf{The CLBP pipeline.} \textbf{Step 0 (Semantic Bootstrapping)} initializes a fixed text anchor; \textbf{Step 1 (Active Visual Denoising, T2V)} filters image noise using the anchor; \textbf{Step 2 (Instance-Adaptive Refinement, V2T)} injects visual bias back into the text prompt, closing the loop. Final predictions are obtained by \textbf{Multi-View Aggregation} across augmented views; the model is trained with a tri-component loss.}
  \label{fig:main_workflow} 
  \vspace{-1em}
\end{figure}



\section{Method}
\label{sec:method}

\subsection{Preliminaries and Threat Model}
\label{sec:prelim}

We consider a pre-trained VLM such as CLIP, with image encoder $f_{\mathcal{V}}$ and text encoder $f_{\mathcal{T}}$.
Given an input image $\mathbf{x}$ and a set of class prompts $\{t_c\}_{c=1}^C$, we obtain unit-normalised embeddings $\mathbf{z}(\mathbf{x})\!=\!f_{\mathcal V}(\mathbf{x})/\|f_{\mathcal V}(\mathbf{x})\|_2$ and $\mathbf{w}_c\!=\!f_{\mathcal T}(t_c)/\|f_{\mathcal T}(t_c)\|_2$, and predict via a temperature-scaled cosine similarity:
\begin{equation}
  p(y\!=\!c\mid \mathbf{x}) \;=\;
  \frac{\exp(\langle \mathbf{z}(\mathbf{x}), \mathbf{w}_c\rangle/\tau)}
       {\sum_{j=1}^C \exp(\langle \mathbf{z}(\mathbf{x}), \mathbf{w}_j\rangle/\tau)},
  \label{eq:clip_softmax}
\end{equation}
with temperature $\tau\!>\!0$ and $C$ classes.

\paragraph{Threat model.}
We adopt a white-box adversary with access to all defence components (adapters, prompts, and the frozen-but-differentiable encoders), which crafts the worst-case perturbation within an $\ell_p$-budget against the full pipeline,
\begin{equation*}
  \mathbf{x}^{\mathrm{adv}} = \mathbf{x} + \arg\max_{\|\boldsymbol{\delta}\|_p\le\epsilon}\mathcal{L}\!\big(\mathrm{CLBP}(\mathbf{x}+\boldsymbol{\delta}),y\big).
\end{equation*}
This is the strongest evaluation regime for adversarial prompt tuning, and the one we adopt throughout.

\subsection{Closed-Loop Bidirectional Prompting}
\label{sec:clbp}

Adversarial perturbations on the input image shift the visual embedding away from its matching text prototype while leaving the textual prototypes unchanged.
AFT restores alignment indirectly by retraining the visual encoder, and APT does so by learning a static prompt; in both cases only one modality is adjusted.
CLBP recovers the alignment directly, by alternating textual denoising and visual refinement under a fixed semantic anchor.

We write $\mathrm{Norm}(\mathbf{u})\!=\!\mathbf{u}/\|\mathbf{u}\|_2$ and let $s\!=\!\exp(\mathrm{logit\_scale})$ denote the CLIP temperature scale.
The encoders $f_{\mathcal V},f_{\mathcal T}$ remain frozen; the trainable parameters are the T2V adapter $\Phi_{\mathrm{T2V}}$, the V2T adapter $\Theta_{\mathrm{V2T}}$, and the learnable context tokens of $\mathrm{Compose}(\cdot)$.
One closed-loop iteration alternates between the visual and textual states,
$\mathbf{W}^{(k)} \xrightarrow{\,\Phi_{\mathrm{T2V}}\,} \mathbf{z}^{(k)}(\mathbf{x}) \xrightarrow{\,\Theta_{\mathrm{V2T}}\,} \mathbf{W}^{(k+1)}(\mathbf{x})$,
and is initialised by a fixed semantic anchor $\mathbf{W}^{(0)}\!=\![\mathbf{w}_1^{(0)},\dots,\mathbf{w}_C^{(0)}]^{\!\top}$ with $\mathbf{w}_c^{(0)}\!=\!\mathrm{Norm}(f_{\mathcal T}(\mathrm{Template}(c)))$ obtained from a generic template (e.g., ``\texttt{a photo of a \{class\}}''). The anchor keeps the iterates close to the pre-trained text manifold throughout the loop.

\paragraph{Closed-loop update.}
Given the current text prototypes $\mathbf{W}^{(k)}$, the T2V step generates visual prompts from $\mathbf{W}^{(k)}$ and feeds them into the image encoder; the V2T step maps the resulting visual feature back into a per-instance text shift:
\begin{align}
  & \text{T2V:} & \mathbf{P}^{(k)} &= \Phi_{\mathrm{T2V}}\!\big(\mathbf{W}^{(k)}\big), &
  \mathbf{z}^{(k)}(\mathbf{x}) &= \mathrm{Norm}\!\big(f_{\mathcal V}(\mathbf{x}\mid \mathbf{P}^{(k)})\big),
  \label{eq:t2v_main} \\[2pt]
  & \text{V2T:} & \boldsymbol{\delta}^{(k+1)} &= \Theta_{\mathrm{V2T}}\!\big(\mathbf{z}^{(k)}(\mathbf{x})\big), &
  \mathbf{w}_c^{(k+1)} &= \mathrm{Norm}\!\Big(f_{\mathcal T}\big(\mathrm{Compose}(t_c^{(0)},\boldsymbol{\delta}^{(k+1)})\big)\Big).
  \label{eq:v2t_main}
\end{align}
$\mathrm{Compose}(\cdot)$ implements a lightweight context-token shift via a small multi-layer perceptron (MLP) that maps $\boldsymbol{\delta}^{(k+1)}$ to a per-token additive bias on the learnable context tokens, with the class-name suffix tokens kept fixed.

Iterating Eqs.~\eqref{eq:t2v_main}--\eqref{eq:v2t_main} for $k\!=\!0,\dots,N{-}1$ and stacking the refined prototypes into $\mathbf{W}^{(N)}(\mathbf{x})$, we read out the final logits as
\begin{equation}
  \boldsymbol{\ell}(\mathbf{x})
  \;=\;
  s\cdot \mathbf{z}^{(N)}(\mathbf{x})\,\big(\mathbf{W}^{(N)}(\mathbf{x})\big)^{\!\top} \in \mathbb{R}^C,
  \label{eq:logits}
\end{equation}
with probabilities given by Eq.~\eqref{eq:clip_softmax}; the full inference procedure is summarised in Algorithm~\ref{alg:clbp_inference} (Appendix~\ref{app:algo}).
This construction raises two questions: (i) since the loop iterates an adversarially-affected branch through another adversarially-affected branch, why should it \emph{converge} and stay close to the pre-trained semantics; and (ii) how many iterations $N$ are needed in practice. We address both before turning to the per-input variance handled by multi-view aggregation.

\paragraph{Stability of the loop.}
\label{sec:theory_main}
We view one closed-loop iteration as a fixed-point map on the unit sphere.
Let $\mathcal{G}\!:\!\mathbb{S}^{d-1}\!\to\!\mathbb{R}^{C\times d}$ denote the V2T map $\mathcal{G}(\mathbf{z})\!=\!\mathbf{W}(\mathbf{z})$ defined by Eq.~\eqref{eq:v2t_main}, and $\mathcal{H}_{\mathbf{x}}\!:\!\mathbb{R}^{C\times d}\!\to\!\mathbb{S}^{d-1}$ the T2V map $\mathcal{H}_{\mathbf{x}}(\mathbf{W})\!=\!\mathrm{Norm}(f_{\mathcal V}(\mathbf{x}\!\mid\!\Phi_{\mathrm{T2V}}(\mathbf{W})))$ defined by Eq.~\eqref{eq:t2v_main}.
The closed-loop update is then their composition
\begin{equation}
  \mathcal{T}_{\mathbf{x}} \;\triangleq\; \mathcal{H}_{\mathbf{x}}\!\circ\!\mathcal{G},
  \qquad
  \mathbf{z}^{(k+1)} \;=\; \mathcal{T}_{\mathbf{x}}\!\big(\mathbf{z}^{(k)}\big),
  \label{eq:Tx}
\end{equation}
restricted to the reachable set $\Omega\!\subseteq\!\mathbb{S}^{d-1}$ induced by the anchor initialisation $\mathbf{z}^{(0)}\!=\!\mathcal{H}_{\mathbf{x}}(\mathbf{W}^{(0)})$ (formally defined in Appendix~\ref{app:setup}).
Whether the loop accumulates or attenuates input perturbations is then governed by a single regularity quantity, the local Lipschitz constant $q\!\triangleq\!\mathrm{Lip}_\Omega(\mathcal{T}_{\mathbf{x}})$, and we are interested in the contractive regime $q\!<\!1$.
The factorised bound $q\!\le\!L_{\mathcal{G}}L_{\mathcal{H}_{\mathbf{x}}}$ is generally loose for CLBP because $\mathrm{Compose}(\cdot)$ projects the high-dimensional shift $\boldsymbol{\delta}^{(k+1)}$ through a low-rank context-token bottleneck: on held-out trajectories of the trained model we measure $L_{\mathcal{G}}\!\approx\!1.4$ and $L_{\mathcal{H}_{\mathbf{x}}}\!\approx\!1.0$, while the composed estimate is $q\!\sim\!10^{-4}$ on every clean and adversarial trajectory we evaluate (Appendix~\ref{app:empirical_lipschitz}, Tab.~\ref{tab:empirical_lipschitz}).
Two consequences follow.

\begin{theorem}[Convergence]
\label{thm:conv_main}
If $q\!<\!1$, $\mathcal{T}_{\mathbf{x}}$ is a contraction on $\Omega$ and admits a unique fixed point $\mathbf{z}^\star\!\in\!\Omega$, with $\|\mathbf{z}^{(k)}-\mathbf{z}^\star\|_2 \le q^k\|\mathbf{z}^{(0)}-\mathbf{z}^\star\|_2$.
\end{theorem}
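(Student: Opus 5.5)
This is a direct application of the Banach fixed-point theorem, so the plan is to check its two hypotheses on $\Omega$ and then read off the rate.

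\textbf{Step 1: completeness and invariance.} We need $\Omega$ to be a nonempty, complete metric space under $\|\cdot\|_2$, and we need $\mathcal{T}_{\mathbf{x}}(\Omega)\subseteq\Omega$. We take $\Omega$ to be the closure in $\mathbb{S}^{d-1}$ of the forward orbit of the anchor initialisation $\mathbf{z}^{(0)}=\mathcal{H}_{\mathbf{x}}(\mathbf{W}^{(0)})$ under $\mathcal{T}_{\mathbf{x}}$, which matches the construction referenced in Appendix~\ref{app:setup}.
\begin{itemize}
\item \emph{Completeness.} The sphere $\mathbb{S}^{d-1}$ is compact, hence complete. Any closed subset of it is therefore complete.
\item \emph{Invariance.} The orbit is mapped into itself by construction. $\mathcal{T}_{\mathbf{x}}$ is Lipschitz, hence continuous, on the orbit, so it extends continuously to the closure. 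The image of a limit of orbit points is the limit of their images, which lies in the closure.
\end{itemize}
This is the step I expect to be the main obstacle. Its difficulty is definitional rather than technical: the appendix definition of $\Omega$ must make it closed and forward-invariant. If it instead defines $\Omega$ as some set of "reachable" states that is not closed, I would replace it by its closure. The hypothesis $q=\mathrm{Lip}_\Omega(\mathcal{T}_{\mathbf{x}})<1$ then transfers to the closure by continuity, so no generality is lost.

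\textbf{Step 2: existence and uniqueness.} Write $\mathbf{z}^{(k)}=\mathcal{T}_{\mathbf{x}}^k(\mathbf{z}^{(0)})$. The Lipschitz bound gives $\|\mathbf{z}^{(k+1)}-\mathbf{z}^{(k)}\|_2\le q^k\|\mathbf{z}^{(1)}-\mathbf{z}^{(0)}\|_2$. Summing a geometric series, for $m>k$,
\[
\|\mathbf{z}^{(m)}-\mathbf{z}^{(k)}\|_2\le \frac{q^k}{1-q}\,\|\mathbf{z}^{(1)}-\mathbf{z}^{(0)}\|_2 ,
\]
so the iterates form a Cauchy sequence. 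By completeness they converge to some $\mathbf{z}^\star\in\Omega$. Passing to the limit in $\mathbf{z}^{(k+1)}=\mathcal{T}_{\mathbf{x}}(\mathbf{z}^{(k)})$, using continuity, gives $\mathcal{T}_{\mathbf{x}}(\mathbf{z}^\star)=\mathbf{z}^\star$.

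For uniqueness, suppose $\mathbf{z}^\star$ and $\mathbf{y}^\star$ are both fixed points in $\Omega$. Then $\|\mathbf{z}^\star-\mathbf{y}^\star\|_2\le q\|\mathbf{z}^\star-\mathbf{y}^\star\|_2$. Since $q<1$, this forces $\mathbf{z}^\star=\mathbf{y}^\star$.

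\textbf{Step 3: rate.} We use that $\mathbf{z}^\star$ is fixed and apply the contraction once per step:
\[
\|\mathbf{z}^{(k)}-\mathbf{z}^\star\|_2=\|\mathcal{T}_{\mathbf{x}}(\mathbf{z}^{(k-1)})-\mathcal{T}_{\mathbf{x}}(\mathbf{z}^\star)\|_2\le q\,\|\mathbf{z}^{(k-1)}-\mathbf{z}^\star\|_2 .
\]
Inducting on $k$ gives $\|\mathbf{z}^{(k)}-\mathbf{z}^\star\|_2\le q^k\|\mathbf{z}^{(0)}-\mathbf{z}^\star\|_2$, which is the claimed bound. This step is routine.
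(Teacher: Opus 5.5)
Your proposal is correct and follows essentially the same route as the paper: Banach's fixed-point theorem on the closed, hence complete, set $\Omega\subseteq\mathbb{S}^{d-1}$, with the geometric rate obtained by applying the contraction once per step. The only differences are that you unpack Banach's proof and check $\mathcal{T}_{\mathbf{x}}(\Omega)\subseteq\Omega$ yourself, whereas the paper builds invariance into the definition of $\Omega$ and cites the theorem. In that check, continuity of $\mathcal{T}_{\mathbf{x}}$ at limit points of the orbit should come from the Lipschitz hypothesis holding on the closed set $\Omega$ itself, not from extending a map that is Lipschitz only on the orbit.
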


\begin{theorem}[Stability under adversarial perturbations]
\label{thm:stab_main}
Assume in addition that $\mathcal{H}_{(\cdot)}$ is input-Lipschitz: $\|\mathcal{H}_{\mathbf{x}+\boldsymbol{\delta}}(\mathbf{W})-\mathcal{H}_{\mathbf{x}}(\mathbf{W})\|_2\le L_x\|\boldsymbol{\delta}\|$ for $\mathbf{W}\!\in\!\mathcal{G}(\Omega)$.
Let $\mathbf{z}^\star(\mathbf{x})$ and $\mathbf{z}^\star(\mathbf{x}+\boldsymbol{\delta})$ be the fixed points of $\mathcal{T}_{\mathbf{x}}$ and $\mathcal{T}_{\mathbf{x}+\boldsymbol{\delta}}$. Then
$\|\mathbf{z}^\star(\mathbf{x}+\boldsymbol{\delta})-\mathbf{z}^\star(\mathbf{x})\|_2 \le \tfrac{L_x}{1-q}\|\boldsymbol{\delta}\|$.
\end{theorem}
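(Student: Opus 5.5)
The argument is the standard parametric contraction estimate. Write $\mathbf{x}'=\mathbf{x}+\boldsymbol{\delta}$, $\mathbf{z}^\star=\mathbf{z}^\star(\mathbf{x})$ and $\mathbf{z}'^\star=\mathbf{z}^\star(\mathbf{x}')$. By Theorem~\ref{thm:conv_main}, applied to both $\mathcal{T}_{\mathbf{x}}$ and $\mathcal{T}_{\mathbf{x}'}$, these fixed points exist and are unique. For this I assume, as the statement implicitly does, that $q<1$ bounds the Lipschitz constant of both maps on a common set $\Omega$ containing both fixed points. The natural choice is to take $\Omega$ (and $q$) uniform over the $\epsilon$-ball of inputs considered in the threat model. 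I would state this convention explicitly, since the reachable set in Appendix~\ref{app:setup} is defined per input.

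Next I use the fixed-point identities $\mathbf{z}'^\star=\mathcal{T}_{\mathbf{x}'}(\mathbf{z}'^\star)$ and $\mathbf{z}^\star=\mathcal{T}_{\mathbf{x}}(\mathbf{z}^\star)$. I insert the intermediate term $\mathcal{T}_{\mathbf{x}}(\mathbf{z}'^\star)$ and apply the triangle inequality:
\begin{equation*}
\|\mathbf{z}'^\star-\mathbf{z}^\star\|_2 \le \|\mathcal{T}_{\mathbf{x}'}(\mathbf{z}'^\star)-\mathcal{T}_{\mathbf{x}}(\mathbf{z}'^\star)\|_2 + \|\mathcal{T}_{\mathbf{x}}(\mathbf{z}'^\star)-\mathcal{T}_{\mathbf{x}}(\mathbf{z}^\star)\|_2 .
\end{equation*}
The second term is at most $q\|\mathbf{z}'^\star-\mathbf{z}^\star\|_2$, because $\mathcal{T}_{\mathbf{x}}$ is $q$-Lipschitz on $\Omega$ and both points lie in $\Omega$.

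For the first term, write $\mathcal{T}_{\mathbf{x}'}(\mathbf{z}'^\star)=\mathcal{H}_{\mathbf{x}'}(\mathbf{W})$ and $\mathcal{T}_{\mathbf{x}}(\mathbf{z}'^\star)=\mathcal{H}_{\mathbf{x}}(\mathbf{W})$, both with the same $\mathbf{W}=\mathcal{G}(\mathbf{z}'^\star)$. The key structural point is that $\mathcal{G}$ does not depend on the image: the V2T map in Eq.~\eqref{eq:v2t_main} uses only $\mathbf{z}$. Since $\mathbf{z}'^\star\in\Omega$, we have $\mathbf{W}\in\mathcal{G}(\Omega)$, so the input-Lipschitz hypothesis bounds the first term by $L_x\|\boldsymbol{\delta}\|$.

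Combining the two bounds gives $(1-q)\|\mathbf{z}'^\star-\mathbf{z}^\star\|_2\le L_x\|\boldsymbol{\delta}\|$. Dividing by $1-q>0$ yields the claim.

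The main obstacle is the domain bookkeeping rather than the algebra. The bound needs $\mathbf{z}'^\star\in\Omega$, so that both the contraction of $\mathcal{T}_{\mathbf{x}}$ and the input-Lipschitz bound on $\mathcal{G}(\Omega)$ apply at that point. I would handle this by the uniform-$\Omega$ convention above. An alternative is to define $\Omega$ as the union of the reachable sets over all perturbed inputs in the budget and assume $q<1$ on that union.
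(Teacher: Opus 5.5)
Your proof is correct and is essentially the paper's argument. The paper packages it as a general fixed-point perturbation lemma and inserts the mirror intermediate term $\mathcal{T}_{\mathbf{x}+\boldsymbol{\delta}}(\mathbf{z}^\star(\mathbf{x}))$, so it uses contraction of the perturbed map together with the uniform bound $\sup_{\mathbf{z}\in\Omega}\|\mathcal{T}_{\mathbf{x}+\boldsymbol{\delta}}(\mathbf{z})-\mathcal{T}_{\mathbf{x}}(\mathbf{z})\|_2\le L_x\|\boldsymbol{\delta}\|$; your choice of $\mathcal{T}_{\mathbf{x}}(\mathbf{z}^\star(\mathbf{x}+\boldsymbol{\delta}))$ instead needs the $q$-contraction only for $\mathcal{T}_{\mathbf{x}}$ itself. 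The domain issue you flag is genuine: the paper simply asserts that both maps are $q$-contractions of the same $\Omega$, even though $\Omega$ and $q$ are defined from $\mathbf{x}$ alone, so your explicit uniform-$\Omega$ convention states an assumption the paper leaves implicit.
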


Theorem~\ref{thm:conv_main} follows from the Banach fixed-point theorem applied to the contraction $\mathcal{T}_{\mathbf{x}}$, and Theorem~\ref{thm:stab_main} from a standard fixed-point perturbation argument; full proofs are in Appendix~\ref{app:proofs}.
Together, the two statements imply that the deviation from the clean fixed point grows linearly in the input perturbation but does not depend on the iteration depth $N$, which is the property the loop needs to be safely iterated at inference time.
With the empirical $q\!\sim\!10^{-4}$, the first iterate satisfies $\|\mathbf{z}^{(1)}-\mathbf{z}^\star\|_2 \le q\,\|\mathbf{z}^{(0)}-\mathbf{z}^\star\|_2$, so a single update is already essentially at the fixed point.
We confirm this directly: along inference trajectories on DTD, $d_k\!=\!\|\mathbf{z}^{(k)}-\mathbf{z}^\star\|_2$ takes values $1.9{\times}10^{-6}$, $8.9{\times}10^{-7}$, $8.5{\times}10^{-7}$ at $k\!=\!0,1,2$, with classification accuracy plateauing at $38.5\%, 54.5\%, 54.5\%$ (Tab.~\ref{tab:fixed_point_proximity}).
The first update therefore captures essentially all of the bidirectional correction, and we set the default to $N\!=\!1$.
Two further lemmas used downstream---a feature-space margin implies prediction invariance, and multi-view outliers are suppressed in expectation---are deferred to Appendix~\ref{app:morelemmas}.

\paragraph{Multi-view aggregation.}
The closed-loop update reduces the systematic adversarial drift on a single forward pass, but each individual augmentation $\mathbf{x}^{(v)}$ still carries its own sampling noise.
We therefore aggregate predictions across $V$ random augmentations of $\mathbf{x}$.
Given per-view outputs $(\boldsymbol{\ell}_v,\mathbf{z}_v)\!=\!\mathrm{CLBP}(\mathbf{x}^{(v)})$, we compute pairwise cosine similarities $\mathbf{S}_{ij}\!=\!\langle\mathbf{z}_i,\mathbf{z}_j\rangle$, take the top-$K$ neighbour set $\mathcal{N}_i\!=\!\mathrm{TopK}(\mathbf{S}_{i,:})$, and aggregate logits with a similarity-weighted softmax:
\begin{equation}
  \mathrm{sc}_i = \tfrac{1}{K}\!\!\sum_{j\in\mathcal{N}_i}\!\! \mathbf{S}_{ij},
  \qquad
  \alpha_i = \frac{\exp(\mathrm{sc}_i/\tau_{\mathrm{mv}})}{\sum_{r}\exp(\mathrm{sc}_r/\tau_{\mathrm{mv}})},
  \qquad
  \boldsymbol{\ell}_{\mathrm{agg}} = \sum_{v} \alpha_v\,\boldsymbol{\ell}_v.
  \label{eq:mv_main}
\end{equation}
Views whose embeddings disagree with the consensus receive small weight; a formal outlier-suppression bound is given in Appendix~\ref{app:mv_outlier}.
We set $V{=}32$ in all main-text experiments to match the protocol of prior multi-view test-time defences (R-TPT and SCC also use $V{=}32$). The view-count ablation in Appendix~\ref{app:viewcount} shows that CLBP's robust accuracy saturates from $V{=}8$ onward---the spread across $V\!\in\!\{8,16,32\}$ on DTD under PGD-100 ($\epsilon{=}4/255$) is within $1.1\%$, on the order of test-time sampling noise across random seeds. Smaller $V$ (down to $V{=}8$ at $0.092$\,s/img) is therefore a viable lower-cost option when inference cost is the binding constraint, with no measurable robustness loss.

\paragraph{Training objective.}
Letting $\boldsymbol{\ell}^{\mathrm{adv}}_{\mathrm{agg}}$ and $\boldsymbol{\ell}^{\mathrm{clean}}_{\mathrm{agg}}$ denote the aggregated logits on adversarial and clean inputs, we minimise
\[
\mathcal{L} \;=\; \mathcal{L}_{\mathrm{ce}}(\boldsymbol{\ell}^{\mathrm{adv}}_{\mathrm{agg}},y) + \lambda_{\mathrm{mar}}\,\mathcal{L}_{\mathrm{mar}} + \lambda_{\mathrm{kl}}\,\mathcal{L}_{\mathrm{kl}},
\]
where $\mathcal{L}_{\mathrm{mar}}$ enforces a feature-space margin on the adversarial logits and $\mathcal{L}_{\mathrm{kl}}$ distils the clean prediction into the adversarial branch. Full definitions and hyper-parameter settings are in Appendix~\ref{app:train}.

\section{Experiments}
\label{sec:experiments}

\subsection{Experimental Setups}

\paragraph{Datasets.} Following standard benchmarks for CLIP adversarial robustness, we evaluate on a suite of 11 datasets covering diverse visual domains: ImageNet-1k~\cite{deng2009imagenet} and Caltech101~\cite{fei2006one} for generic object recognition; OxfordPets~\cite{parkhi2012cats}, Flowers102~\cite{nilsback2008automated}, Food101~\cite{bossard2014food}, StanfordCars~\cite{krause20133d}, and FGVC-Aircraft~\cite{maji2013fine} for fine-grained classification; SUN397~\cite{xiao2010sun} for scene recognition; UCF101~\cite{soomro2012ucf101} for action recognition; and DTD~\cite{cimpoi2014describing} together with EuroSAT~\cite{helber2019eurosat} for specialised domains (textures and satellite imagery).

\begin{table}[t!]
\centering

\caption{\textbf{Cross-dataset zero-shot robustness on 10 datasets.} Training-based baselines follow a unified protocol: ImageNet, 5 epochs, 2-step PGD at $\epsilon_{\mathrm{train}}{=}1/255$, $\alpha{=}1/255$. Evaluation: white-box 10-step PGD at $\epsilon_a{=}1/255$ in $\ell_\infty$. We report Top-1 clean (Acc.) and robust (Rob.) accuracy (\%); $\Delta$ is the gap between CLBP and vanilla CLIP.}
\label{table-main}
\resizebox{\textwidth}{!}{
\begin{tabular}{c|c|c|ccc|cccc|ccccc|c}
\hline
\multirow{2}{*}{\textbf{Dataset}} & \multirow{2}{*}{Metric} & \multirow{2}{*}{CLIP}
& \multicolumn{3}{c|}{\textbf{Test-time Defense}}
& \multicolumn{4}{c|}{\textbf{Adversarial Fine-Tuning}}
& \multicolumn{5}{c|}{\textbf{Adversarial Prompt Tuning}}
& \multirow{2}{*}{$\Delta$} \\
& &
& Anti-adv & TTC & SCC
& CLIP-FT & TeCoA & PMG-AFT & FARE
& AdvVP & AdvVLP & AdvMaPLe & FAP & CLBP(ours)
& \\
\noalign{\kern-\cmidrulewidth}
\cmidrule(lr){1-1}\cmidrule(lr){2-2}\cmidrule(lr){3-15}\cmidrule(lr){16-16}

\multirow{2}{*}{ImageNet}
& Rob. & 1.15
& 8.67 & 38.41 & 49.77
& 0.93 & 18.89 & 21.43 & 14.00 & 13.62 & 22.38 & 22.40 & 22.33 & {\cellcolor[rgb]{0.969,0.988,1}}\textbf{58.42}
& {\cellcolor[rgb]{0.992,0.973,0.973}}\textcolor[rgb]{0.502,0,0}{+57.27} \\
& Acc. & 59.69
& 54.27 & 49.39 & 56.03
& 54.24 & 34.89 & 36.12 & 48.79 & 47.93 & 52.99 & 53.04 & 55.15 & {\cellcolor[rgb]{0.969,0.988,1}}65.96
& {\cellcolor[rgb]{0.992,0.973,0.973}}+6.27 \\

\multirow{2}{*}{Caltech101}
& Rob. & 14.67
& 34.81 & 65.78 & 77.25
& 14.21 & 55.51 & 61.08 & 50.74 & 51.32 & 66.00 & 66.94 & 66.13 & {\cellcolor[rgb]{0.969,0.988,1}}\textbf{86.94}
& {\cellcolor[rgb]{0.992,0.973,0.973}}\textcolor[rgb]{0.502,0,0}{+72.27} \\
& Acc. & 85.66
& 84.02 & 86.53 & 86.44
& 83.63 & 71.68 & 75.45 & 80.95 & 86.09 & 86.69 & 87.63 & 87.38 & {\cellcolor[rgb]{0.969,0.988,1}}91.12
& {\cellcolor[rgb]{0.992,0.973,0.973}}+5.46 \\

\multirow{2}{*}{OxfordPets}
& Rob. & 1.04
& 20.42 & 57.87 & 76.67
& 2.10 & 38.35 & 41.18 & 31.07 & 22.95 & 42.19 & 43.58 & 43.85 & {\cellcolor[rgb]{0.969,0.988,1}}\textbf{81.68}
& {\cellcolor[rgb]{0.992,0.973,0.973}}\textcolor[rgb]{0.502,0,0}{+80.64} \\
& Acc. & 87.44
& 80.62 & 83.35 & 86.48
& 84.14 & 62.12 & 65.88 & 79.37 & 78.00 & 79.18 & 78.00 & 80.02 & {\cellcolor[rgb]{0.969,0.988,1}}85.04
& {\cellcolor[rgb]{0.992,0.973,0.973}}-2.4 \\

\multirow{2}{*}{Flowers102}
& Rob. & 1.14
& 7.16 & 39.14 & 54.59
& 0.54 & 21.94 & 23.43 & 17.14 & 20.42 & 25.01 & 28.58 & 26.84 & {\cellcolor[rgb]{0.969,0.988,1}}\textbf{55.42}
& {\cellcolor[rgb]{0.992,0.973,0.973}}\textcolor[rgb]{0.502,0,0}{+54.28} \\
& Acc. & 65.46
& 62.66 & 64.16 & 64.16
& 53.37 & 36.80 & 37.00 & 47.98 & 49.45 & 51.36 & 53.07 & 53.67 & {\cellcolor[rgb]{0.969,0.988,1}}63.58
& {\cellcolor[rgb]{0.992,0.973,0.973}}-1.88 \\

\multirow{2}{*}{FGVC-Aircraft}
& Rob. & 0.00
& 1.27 & 13.77 & 17.40
& 0.00 & 2.49 & 2.22 & 1.35 & 1.65 & 2.91 & 2.19 & 1.92 & {\cellcolor[rgb]{0.969,0.988,1}}\textbf{15.18}
& {\cellcolor[rgb]{0.992,0.973,0.973}}\textcolor[rgb]{0.502,0,0}{+15.18} \\
& Acc. & 20.10
& 15.88 & 18.00 & 17.61
& 14.04 & 5.31 & 5.55 & 10.86 & 9.45 & 11.31 & 9.93 & 10.71 & {\cellcolor[rgb]{0.969,0.988,1}}19.71
& {\cellcolor[rgb]{0.992,0.973,0.973}}-0.39 \\

\multirow{2}{*}{StanfordCars}
& Rob. & 0.02
& 4.40 & 33.01 & 43.24
& 0.06 & 8.76 & 11.65 & 6.75 & 2.98 & 11.38 & 11.25 & 10.57 & {\cellcolor[rgb]{0.969,0.988,1}}\textbf{48.97}
& {\cellcolor[rgb]{0.992,0.973,0.973}}\textcolor[rgb]{0.502,0,0}{+48.95} \\
& Acc. & 52.02
& 36.21 & 48.16 & 51.19
& 42.11 & 20.91 & 25.44 & 38.68 & 22.00 & 39.58 & 40.59 & 39.52 & {\cellcolor[rgb]{0.969,0.988,1}}60.05
& {\cellcolor[rgb]{0.992,0.973,0.973}}+8.03 \\

\multirow{2}{*}{SUN397}
& Rob. & 1.14
& 8.05 & 41.52 & 51.19
& 0.94 & 19.39 & 22.58 & 14.91 & 14.11 & 22.39 & 21.91 & 22.08 & {\cellcolor[rgb]{0.969,0.988,1}}\textbf{57.06}
& {\cellcolor[rgb]{0.992,0.973,0.973}}\textcolor[rgb]{0.502,0,0}{+55.92} \\
& Acc. & 58.50
& 56.00 & 55.13 & 53.27
& 55.73 & 36.69 & 37.98 & 52.42 & 42.51 & 53.92 & 53.17 & 54.44 & {\cellcolor[rgb]{0.969,0.988,1}}65.24
& {\cellcolor[rgb]{0.992,0.973,0.973}}+6.74 \\

\multirow{2}{*}{Food101}
& Rob. & 0.70
& 13.12 & 57.84 & 65.39
& 0.42 & 13.90 & 18.57 & 11.65 & 11.51 & 18.77 & 19.28 & 18.61 & {\cellcolor[rgb]{0.969,0.988,1}}\textbf{66.12}
& {\cellcolor[rgb]{0.992,0.973,0.973}}\textcolor[rgb]{0.502,0,0}{+65.42} \\
& Acc. & 83.88
& 75.81 & 82.18 & 82.13
& 64.86 & 29.98 & 36.61 & 55.31 & 58.76 & 55.64 & 56.76 & 58.78 & {\cellcolor[rgb]{0.969,0.988,1}}77.46
& {\cellcolor[rgb]{0.992,0.973,0.973}}-6.42 \\

\multirow{2}{*}{EuroSAT}
& Rob. & 0.03
& 2.15 & 12.19 & 20.64
& 0.04 & 11.96 & 12.60 & 10.67 & 2.37 & 11.01 & 11.09 & 10.79 & {\cellcolor[rgb]{0.969,0.988,1}}\textbf{18.96}
& {\cellcolor[rgb]{0.992,0.973,0.973}}\textcolor[rgb]{0.502,0,0}{+18.93} \\
& Acc. & 42.59
& 36.78 & 53.24 & 41.69
& 27.64 & 16.58 & 18.53 & 21.88 & 20.17 & 18.25 & 20.49 & 18.80 & {\cellcolor[rgb]{0.969,0.988,1}}26.86
& {\cellcolor[rgb]{0.992,0.973,0.973}}-15.73 \\

\multirow{2}{*}{DTD}
& Rob. & 2.98
& 5.62 & 27.32 & 34.57
& 2.39 & 17.61 & 14.95 & 15.64 & 15.31 & 19.50 & 16.49 & 16.49 & {\cellcolor[rgb]{0.969,0.988,1}}\textbf{38.95}
& {\cellcolor[rgb]{0.992,0.973,0.973}}\textcolor[rgb]{0.502,0,0}{+35.97} \\
& Acc. & 40.64
& 38.92 & 36.98 & 37.34
& 36.49 & 25.16 & 21.76 & 32.07 & 32.74 & 32.57 & 28.31 & 30.56 & {\cellcolor[rgb]{0.969,0.988,1}}44.86
& {\cellcolor[rgb]{0.992,0.973,0.973}}+4.22 \\

\hline\hline
\multirow{2}{*}{Avg.}
& Rob. & 2.29
& 10.57 & 38.69 & 49.71
& 2.16 & 20.88 & 22.97 & 17.39 & 15.62 & 24.15 & 24.37 & 23.96 & {\cellcolor[rgb]{0.969,0.988,1}}\textbf{52.77}
& {\cellcolor[rgb]{0.992,0.973,0.973}}\textcolor[rgb]{0.502,0,0}{+50.48} \\
& Acc. & 59.60
& 54.12 & 57.71 & 57.63
& 51.63 & 34.01 & 36.03 & 48.55 & 44.71 & 48.15 & 48.10 & 48.90 & {\cellcolor[rgb]{0.969,0.988,1}}59.99
& {\cellcolor[rgb]{0.992,0.973,0.973}}+0.39 \\
\hline
\end{tabular}}
\end{table}

\paragraph{Baselines.} We compare against representative methods spanning the three defence paradigms.
For \emph{test-time defences} we evaluate Anti-Adv (image--text consistency maximisation)~\cite{alfarra2022combating}, TTC (counter-perturbation optimisation)~\cite{xing2025clip}, and R-TPT (multi-view entropy minimisation)~\cite{sheng2025r}, alongside standard adaptation methods (TPT~\cite{shu2022test}, C-TPT~\cite{yoon2024c}, MTA~\cite{zanella2024test}).
For \emph{training-based defences} we adopt a strict \textit{cross-dataset transfer protocol} to assess generalisation: models are trained on a large-scale source dataset (ImageNet) and evaluated directly on unseen downstream domains. Within this protocol we compare two sub-categories:
(i) \emph{Adversarial Fine-Tuning} (AFT): TeCoA (semantic guidance)~\cite{mao2022understanding}, FARE (unsupervised)~\cite{schlarmann2024robust}, PMG-AFT~\cite{wang2024pre}, and a clean fine-tuned baseline (CLIP-FT); and
(ii) \emph{Adversarial Prompt Tuning} (APT) in the few-shot regime: AdvVP (visual prompts only)~\cite{mao2022understanding}, AdvVLP (independent dual prompts)~\cite{zhou2024few}, AdvMaPLe (unidirectional deep prompting)~\cite{zhou2024few}, and FAP (interaction-optimised prompting)~\cite{zhou2024few}.

\paragraph{Implementation Details.} We use CLIP with Vision Transformer (ViT) backbones (ViT-B/16 and ViT-B/32) as the visual encoders. The semantic anchor is initialised from the standard manual template \texttt{"a photo of a [class]"} on every dataset, and both the visual and textual learnable prompts use a context length of 4 tokens (sensitivity in Appendix~\ref{app:prompt_length}). Following the schedule in Appendix~\ref{app:optimization}, models are trained for 5 epochs under the cross-dataset transfer benchmark and 10 epochs in the few-shot setting; adversarial perturbations are generated by a 2-step PGD attack at $\epsilon{=}1/255$ and $\alpha{=}1/255$ in $\ell_\infty$. We report Top-1 accuracy on both clean and adversarial examples. At evaluation we use a suite of white-box and adaptive attacks: PGD-$\ell_\infty$~\cite{madry2017towards}, the ensemble framework AutoAttack~\cite{croce2020reliable}, the margin-based Carlini--Wagner attack (CW-$\ell_\infty$)~\cite{carlini2017towards}, and the adaptive Expectation-over-Transformation PGD (EOT-PGD)~\cite{athalye2018obfuscated}; their roles, budgets and results are presented alongside the corresponding analyses in Section~\ref{sec:experiments}. The defence mechanism is visualised in Appendix~\ref{sec:appendix_analysis}, and all experiments are run on NVIDIA H100 GPUs.

\begin{table}[t!]
\centering
\caption{\textbf{16-shot adversarial adaptation across 11 datasets.} Training: 10 epochs, 2-step PGD at $\epsilon_{\mathrm{train}}{=}1/255$, $\alpha{=}1/255$. Evaluation: 100-step PGD at $\epsilon_a{=}1/255$ in $\ell_\infty$. Top-1 clean (Acc.) and robust (Rob.) accuracy (\%); best in each cell is \textbf{bold}.}
\label{table-few-shot-clean}
\resizebox{\textwidth}{!}{
\begin{tabular}{l|c| ccccccccccc|c}
\toprule
\textbf{Method} & \textbf{Metric} & 
\textbf{ImageNet} & \textbf{Caltech101} & \textbf{OxfordPets} & \textbf{Flowers102} & \textbf{Aircraft} & \textbf{Cars} & \textbf{SUN397} & \textbf{Food101} & \textbf{EuroSAT} & \textbf{DTD} & \textbf{UCF101} & \textbf{Avg.} \\
\midrule

\multirow{2}{*}{AdvVP} 
& Rob. & 12.77 & 52.60 & 16.43 & 22.03 & 0.63 & 3.57 & 17.63 & 0.80 & 15.83 & 13.87 & 0.93 & 14.28 \\
& Acc. & 46.27 & 90.40 & 56.40 & 56.17 & 1.33 & 14.83 & 54.70 & 1.07 & 18.13 & 29.20 & 0.97 & 33.59 \\

\multirow{2}{*}{AdvVLP} 
& Rob. & 22.10 & 67.97 & 35.57 & 58.70 & 8.40 & 17.47 & 29.70 & 16.50 & 17.30 & 32.72 & 32.80 & 30.84 \\
& Acc. & 53.23 & 92.37 & 82.93 & 87.70 & 23.27 & 56.00 & 63.90 & 43.30 & 15.50 & 57.53 & 69.10 & 58.62 \\

\multirow{2}{*}{AdvMaPLe} 
& Rob. & 21.90 & 68.63 & 36.87 & 58.70 & 7.33 & 17.57 & 29.70 & 25.27 & 32.97 & 32.17 & 31.67 & 32.98 \\
& Acc. & 52.93 & 92.17 & 83.27 & 87.87 & 23.63 & 56.17 & 63.57 & 65.13 & 54.97 & 57.93 & 68.97 & 64.24 \\

\multirow{2}{*}{FAP} 
& Rob. & 22.90 & 67.33 & 41.00 & 61.47 & 7.97 & 19.23 & 30.27 & 26.67 & 39.73 & 31.33 & 32.80 & 34.61 \\
& Acc. & 52.53 & 91.10 & 81.90 & 86.27 & 23.50 & 54.23 & 62.37 & 64.03 & 81.70 & 55.17 & 65.70 & 65.32 \\

\midrule
\multirow{2}{*}{\textbf{CLBP}} 
& \textbf{Rob.} & {\cellcolor[rgb]{0.969,0.988,1}}\textbf{59.88} & {\cellcolor[rgb]{0.969,0.988,1}}\textbf{92.01} & {\cellcolor[rgb]{0.969,0.988,1}}\textbf{85.20} & {\cellcolor[rgb]{0.969,0.988,1}}\textbf{75.48} & {\cellcolor[rgb]{0.969,0.988,1}}\textbf{15.42} & {\cellcolor[rgb]{0.969,0.988,1}}\textbf{56.24} & {\cellcolor[rgb]{0.969,0.988,1}}\textbf{55.53} & {\cellcolor[rgb]{0.969,0.988,1}}\textbf{68.94} & {\cellcolor[rgb]{0.969,0.988,1}}\textbf{78.57} & {\cellcolor[rgb]{0.969,0.988,1}}\textbf{57.57} & {\cellcolor[rgb]{0.969,0.988,1}}\textbf{71.03} & {\cellcolor[rgb]{0.969,0.988,1}}\textbf{65.08} \\
& \textbf{Acc.} & {\cellcolor[rgb]{0.969,0.988,1}}\textbf{68.42} & {\cellcolor[rgb]{0.969,0.988,1}}\textbf{95.70} & {\cellcolor[rgb]{0.969,0.988,1}}\textbf{90.84} & {\cellcolor[rgb]{0.969,0.988,1}}\textbf{84.13} & {\cellcolor[rgb]{0.969,0.988,1}}\textbf{17.46} & {\cellcolor[rgb]{0.969,0.988,1}}\textbf{68.91} & {\cellcolor[rgb]{0.969,0.988,1}}\textbf{65.61} & {\cellcolor[rgb]{0.969,0.988,1}}\textbf{80.70} & {\cellcolor[rgb]{0.969,0.988,1}}\textbf{88.06} & {\cellcolor[rgb]{0.969,0.988,1}}\textbf{62.88} & {\cellcolor[rgb]{0.969,0.988,1}}\textbf{76.69} & {\cellcolor[rgb]{0.969,0.988,1}}\textbf{72.67} \\

\bottomrule
\end{tabular}}
\end{table}

\begin{figure}[t]
  \centering
  \includegraphics[width=\linewidth]{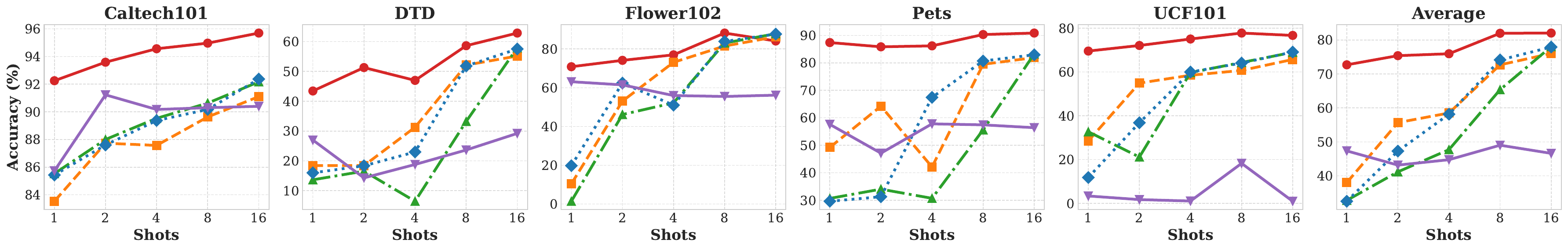}\\[2pt]
  \includegraphics[width=\linewidth]{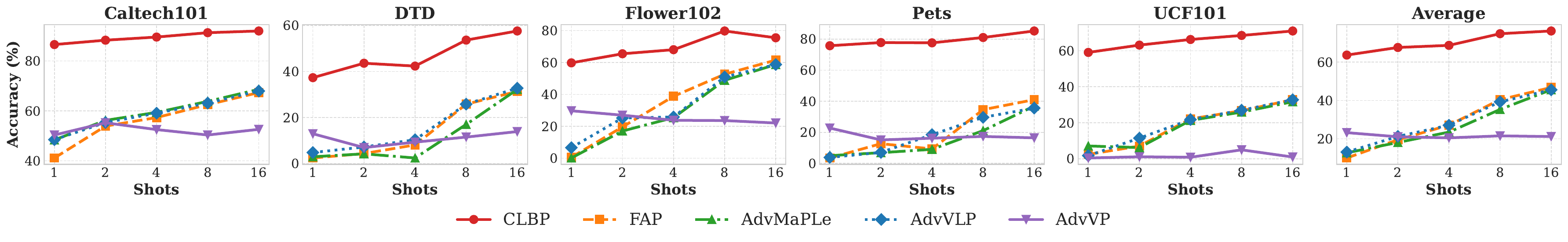}
  \caption{\textbf{Few-shot accuracy averaged over 11 datasets} for $n{\in}\{1,2,4,8,16\}$ shots/class. Training: 10 epochs, 2-step PGD at $\epsilon_{\mathrm{train}}{=}1/255$. \emph{Top}: Top-1 clean; \emph{Bottom}: Top-1 PGD-100 robust accuracy at $\epsilon_a{=}1/255$.}
  \label{fig:avg_combined}
\end{figure}

\subsection{Main Results}

\paragraph{Cross-dataset Robustness Evaluation.}
Under the cross-dataset zero-shot protocol (10 datasets, 10-step PGD at $\epsilon{=}1/255$, $\alpha{=}1/255$ in $\ell_\infty$), Table~\ref{table-main} reports CLBP at $52.77\%$ average robust accuracy: $3.06\%$ above the strongest test-time defence SCC ($49.71\%$), and roughly twice the strongest training-based baselines (AdvMaPLe $24.37\%$, FAP $23.96\%$, PMG-AFT $22.97\%$).
Clean accuracy is $59.99\%$, $0.39\%$ above vanilla CLIP, so the robustness gain is not paid for by a clean-accuracy drop.

\paragraph{Adversarial Few-shot Learning.}
We evaluate few-shot adaptation with $n\!\in\!\{1, 2, 4, 8, 16\}$ shots per class. Models are trained with 2-step PGD at $\epsilon{=}1/255$ and evaluated under PGD-100 at the same budget.

\textbf{16-shot.}
On all 11 datasets in Table~\ref{table-few-shot-clean}, CLBP attains $65.08\%$ average robust accuracy, while the closest baselines---FAP ($34.61\%$) and AdvMaPLe ($32.98\%$)---reach roughly half of that. We attribute the gap to the static prompts used by these baselines, which cannot adapt to per-input perturbations, whereas the bidirectional update in CLBP refines the prediction for each instance.

\textbf{Low-shot regime.}
Figure~\ref{fig:avg_combined} plots robustness as a function of the number of shots. At $n{=}1$, the average baseline robustness drops to $\sim\!2.43\%$, indicating that the static prompts overfit to the support perturbations; CLBP retains $41.13\%$ at the same shot count. The semantic anchor, which acts as a fixed prior on the per-instance adaptation, is the main source of this gap.

\paragraph{Adversarial Base-to-New Generalisation.}
We measure how well adversarial robustness transfers from seen (Base) to unseen (New) categories.
In Table~\ref{tab:avg_base2new}, FAP drops from $38.05\%$ on Base to $21.86\%$ on New, while CLBP retains $54.92\%$ on New, $33.06\%$ above the next best.
The drop on FAP is consistent with the static prompt overfitting to the Base classes; in CLBP the anchor-guided update keeps the prediction tied to the pre-trained text manifold and transfers across class boundaries. Per-dataset numbers are given in Appendix~\ref{basetonew_detail}.

\paragraph{Scalability and Versatility.}
We extend the evaluation along two axes: backbone scale, and attack diversity beyond the PGD-100 attacks used above.

\textbf{ViT-B/16 backbone.}
On the larger ViT-B/16 backbone with $\epsilon{=}4.0/255$, CLBP attains $48.2\%$ zero-shot robust accuracy, $+8.3\%$ over R-TPT (Appendix~\ref{sec:vit16}); the gain transfers to the finer-grained encoder.

\textbf{AutoAttack ensemble.}
On DTD with $\epsilon{=}4.0/255$, CLBP retains $31.86\%$ versus FAP's $10.47\%$ (Appendix~\ref{sec:autoattack}); robustness is preserved under a parameter-free Auto-PGD ensemble combining cross-entropy and Difference-of-Logits-Ratio losses (APGD-CE / APGD-DLR), ruling out hand-tuned PGD as the source of the gain.

\textbf{Carlini--Wagner ($\ell_\infty$) margin attack.}
The CW attack~\cite{carlini2017towards} replaces cross-entropy with a margin objective that drives the logit gap between the correct class and the strongest competitor below zero, producing tighter misclassifications near the decision boundary than untargeted PGD. At $\epsilon{=}1/255$, CLBP attains the highest robust accuracy on each of the 8 evaluated datasets, with an average $+10.0\%$ gap over the strongest baseline SCC (Appendix~\ref{app:cw}); the gain extends from gradient-magnitude to margin-based attacks.

\textbf{Adaptive EOT-PGD.}
Because the multi-view aggregation in Eq.~\eqref{eq:mv_main} introduces stochasticity in the forward pass, an attacker may underestimate the true gradient by sampling a single random draw---a failure mode known as \emph{gradient obfuscation}~\cite{athalye2018obfuscated}. EOT-PGD addresses this by drawing $\mathrm{eot}$ independent augmentation samples at each PGD step and averaging their per-sample gradients, giving an unbiased Monte-Carlo estimator of the expected loss gradient under the augmentation distribution; larger $\mathrm{eot}$ tightens the estimate. At $\mathrm{eot}{=}4$ on the unaltered stochastic pipeline, CLBP retains $73.14\%$ on Caltech101 and $28.72\%$ on DTD, $+27.34\%$ and $+15.95\%$ above R-TPT respectively (Appendix~\ref{app:eot}); the robustness gain is not an artefact of gradient obfuscation.

\textbf{Attack-strength sensitivity.}
Fig.~\ref{fig:ablations}(a) shows how CLBP degrades as the attacker's compute budget grows: as the number of PGD attack episodes (independent restarts) varies over $\{1,2,4,8\}$, robustness decreases monotonically from $85.2\%$ to $54.9\%$ on OxfordPets and from $57.6\%$ to $37.8\%$ on DTD, while remaining above the corresponding 1-episode baselines in Table~\ref{table-few-shot-clean} at every budget.

\subsection{Ablation Studies}
\label{sec:ablation}

We ablate the design choices of CLBP on Caltech101 and DTD: (i) which modules are responsible for the gain (Tab.~\ref{tab:ablation_study}), and (ii) how the multi-view budget trades off against latency (Fig.~\ref{fig:ablations}(c)). Robustness under the adaptive EOT-PGD attack introduced in Section~\ref{sec:experiments} is reported in Appendix~\ref{app:eot}.

\begin{table}[t]
\centering
\begin{minipage}[t]{0.42\linewidth}
  \centering
  \captionof{table}{\textbf{Adversarial base-to-new generalization.} Trained on Base classes (16-shot, 10 epochs, 2-step PGD at $\epsilon_{\mathrm{train}}{=}1/255$); evaluated on Base/New splits under PGD-100 at $\epsilon_a{=}1/255$. Reported: average Top-1 clean (Acc) / robust (Rob) accuracy (\%) across 11 datasets.}
  \label{tab:avg_base2new}
  \vspace{4pt}
  \small
  \setlength{\tabcolsep}{4pt}
  \renewcommand{\arraystretch}{1.2}
  \begin{tabular}{l cc cc}
    \toprule
    \multirow{2}{*}{\textbf{Method}}
      & \multicolumn{2}{c}{\textbf{Base}}
      & \multicolumn{2}{c}{\textbf{New}} \\
    \cmidrule(lr){2-3}\cmidrule(lr){4-5}
      & Acc & Rob & Acc & Rob \\
    \midrule
    AdvVP    & 31.68 & 14.43 & 30.39 & 13.36 \\
    AdvVLP   & 60.38 & 30.69 & 46.18 & 20.25 \\
    AdvMaPLe & 58.95 & 32.37 & 46.92 & 21.61 \\
    FAP      & 70.52 & 38.05 & 49.58 & 21.86 \\
    \midrule
    \textbf{CLBP (ours)}
      & \textbf{78.77} & \textbf{72.38}
      & \textbf{62.51} & \textbf{54.92} \\
    \bottomrule
  \end{tabular}
\end{minipage}\hfill
\begin{minipage}[t]{0.56\linewidth}
  \centering
  \captionof{table}{\textbf{Component analysis on Caltech101 and DTD.} Sem.\,=\,Semantic Anchor; T2V/V2T are the cross-modal adapters (Section~\ref{sec:clbp}); Loop\,=\,closed-loop alternation. Row~5 is the sequential T2V$\,{\to}\,$V2T baseline (no loop). 16-shot training (10 epochs, 2-step PGD at $\epsilon_{\mathrm{train}}{=}1/255$); PGD-100 evaluation at $\epsilon_a{=}1/255$. Top-1 clean (Acc.) / robust (Rob.) accuracy (\%).}
  \label{tab:ablation_study}
  \vspace{4pt}
  \small
  \setlength{\tabcolsep}{4pt}
  \renewcommand{\arraystretch}{1.05}
  \begin{tabular}{cccc | cc | cc}
    \toprule
    \multirow{2}{*}{Sem.} & \multirow{2}{*}{T2V} & \multirow{2}{*}{V2T} & \multirow{2}{*}{Loop} & \multicolumn{2}{c|}{Caltech101} & \multicolumn{2}{c}{DTD} \\
     & & & & Acc. & Rob. & Acc. & Rob. \\
    \midrule
    \xmark & \xmark & \xmark & --     & 90.99 & 84.46 & 42.44 & 35.58 \\
    \midrule
    \xmark & \cmark & \xmark & --     & 92.66 & 88.64 & 46.04 & 39.78 \\
    \xmark & \xmark & \cmark & --     & 94.24 & 90.43 & 54.61 & 50.18 \\
    \xmark & \cmark & \cmark & --     & 81.22 & 75.42 & 60.46 & 55.73 \\
    \midrule
    \cmark & \cmark & \cmark & \xmark & 94.52         & 91.16         & 61.46         & 56.68 \\
    \cmark & \cmark & \cmark & \cmark & \textbf{95.70}& \textbf{92.01}& \textbf{62.88}& \textbf{57.57} \\
    \bottomrule
  \end{tabular}
\end{minipage}
\end{table}

\begin{figure}[t]
\centering
\begin{subfigure}[t]{0.32\linewidth}
  \centering
  \includegraphics[width=\linewidth, height=3.8cm, keepaspectratio]{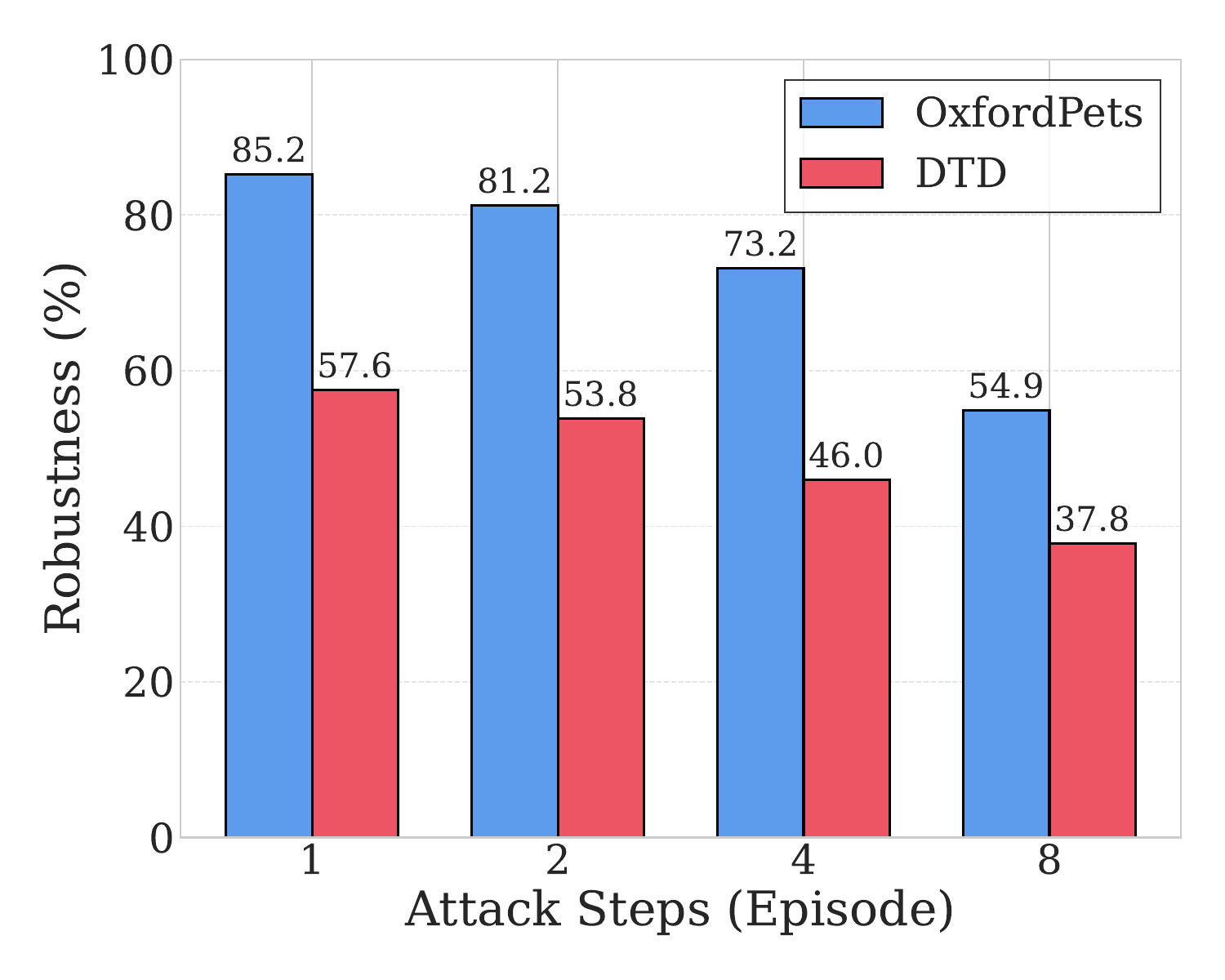}
  \caption{Attack-strength sensitivity.}
  \label{fig:episode-ablation}
\end{subfigure}\hfill
\begin{subfigure}[t]{0.32\linewidth}
  \centering
  \includegraphics[width=\linewidth, height=3.8cm, keepaspectratio]{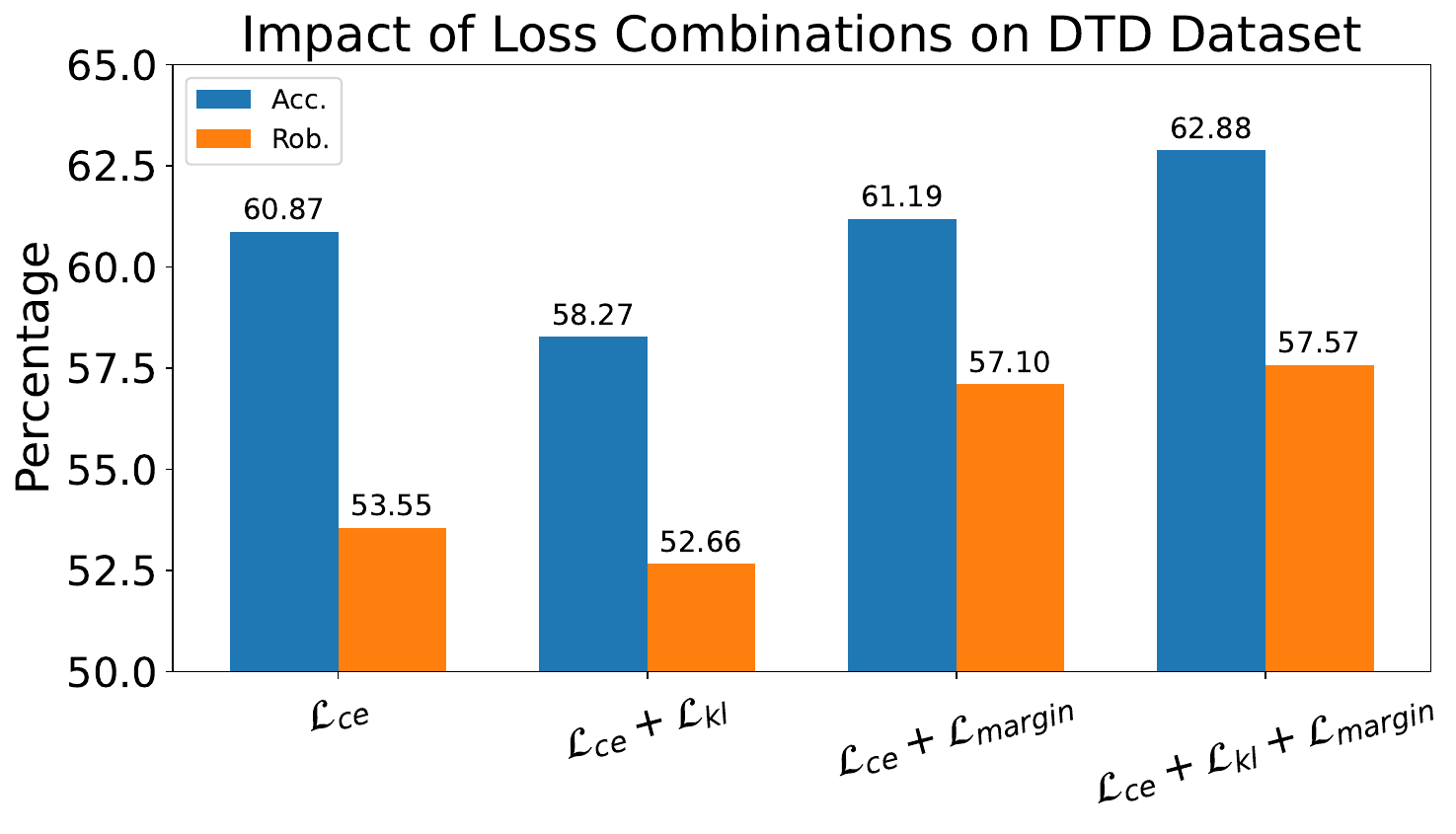}
  \caption{Loss-component ablation.}
  \label{fig:dtd-loss-ablation}
\end{subfigure}\hfill
\begin{subfigure}[t]{0.32\linewidth}
  \centering
  \includegraphics[width=\linewidth, height=3.8cm, keepaspectratio]{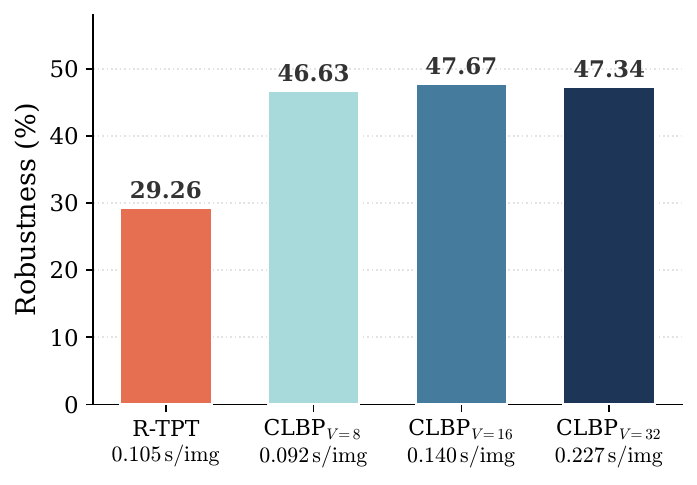}
  \caption{Efficiency vs.\ robustness.}
  \label{fig:efficiency-ablation}
\end{subfigure}
\caption{\textbf{Ablation on attack strength, training losses and view budget.} All panels use the 16-shot CLBP model (10 epochs, 2-step PGD at $\epsilon_{\mathrm{train}}{=}1/255$).
(a) Top-1 PGD-100 robust accuracy at $\epsilon_a{=}1/255$ on OxfordPets and DTD as the number of PGD attack episodes (independent restarts) varies in $\{1,2,4,8\}$.
(b) Impact of loss components $\mathcal{L}_{ce}, \mathcal{L}_{kl}, \mathcal{L}_{mar}$ (Section~\ref{sec:clbp}) on Top-1 clean / PGD-100 robust accuracy ($\epsilon_a{=}1/255$) on DTD.
(c) Per-image latency vs.\ Top-1 PGD-100 robust accuracy on DTD at the stronger $\epsilon_a{=}4/255$ for R-TPT and CLBP at $V{\in}\{8,16,32\}$.}
\label{fig:ablations}
\end{figure}

\paragraph{Module Analysis.}
Table~\ref{tab:ablation_study} ablates the Semantic Anchor, T2V denoising, and V2T refinement.
Either T2V or V2T alone improves over the baseline, and V2T contributes more on DTD than T2V.
Combining T2V and V2T \emph{without} the Semantic Anchor degrades clean accuracy on Caltech101 ($90.99\!\to\!81.22\%$), consistent with Theorem~\ref{thm:conv_main}: without the anchor, the loop has no fixed initialisation and the iterates can drift outside $\Omega$.
Adding the anchor recovers the loss and yields the best result on both datasets.

\paragraph{Efficiency vs.\ Robustness.}
Fig.~\ref{fig:ablations}(c) reports CLBP at $V\!\in\!\{8,16,32\}$ alongside R-TPT.
Robust accuracy is essentially flat across the three view budgets ($46.63\%$, $47.67\%$, $47.34\%$, within $1.1\%$), while latency grows roughly linearly from $0.092$ to $0.227$\,s/img.
This indicates that the multi-view consensus is already saturated at $V{=}8$, so the main-table results reported at $V{=}32$ are not driven by the larger view budget. We retain $V{=}32$ in the main experiments for parity with R-TPT and SCC, while $V{=}8$ remains a viable lower-cost setting that already exceeds R-TPT in both latency ($0.092$ vs.\ $0.105$\,s/img) and robustness ($46.63\%$ vs.\ $29.26\%$). The dependence on the closed-loop iteration depth $T$ is studied separately in Appendix~\ref{sec:efficiency}.

\paragraph{Training Loss Components.}
Fig.~\ref{fig:ablations}(b) ablates the three terms of the training objective $\mathcal{L} = \mathcal{L}_{\mathrm{ce}} + \lambda_{\mathrm{mar}}\mathcal{L}_{\mathrm{mar}} + \lambda_{\mathrm{kl}}\mathcal{L}_{\mathrm{kl}}$ (Section~\ref{sec:clbp}) on DTD.
$\mathcal{L}_{\mathrm{ce}}$ alone reaches $60.87\%$ clean and $53.55\%$ robust accuracy.
Adding $\mathcal{L}_{\mathrm{kl}}$ alone has a marginal effect on robustness, while adding $\mathcal{L}_{\mathrm{mar}}$ alone improves robustness by $+3.55\%$ ($53.55\!\to\!57.10\%$).
The full combination is best, reaching $62.88\%$ clean and $57.57\%$ robust accuracy: $\mathcal{L}_{\mathrm{mar}}$ supplies the bulk of the robustness gain, while $\mathcal{L}_{\mathrm{kl}}$ recovers the clean accuracy that $\mathcal{L}_{\mathrm{mar}}$ alone slightly sacrifices.

\section{Conclusion}
\label{sec:conclusion}

We presented Closed-Loop Bidirectional Prompting (CLBP), an adversarial defence for VLMs that recovers cross-modal alignment by alternating textual denoising and visual refinement under a fixed semantic anchor.
The composed loop is locally contractive on the reachable set induced by the anchor, with empirical Lipschitz constant $q\!\sim\!10^{-4}$, so a single bidirectional update is already close to the loop's fixed point.
Across 11 classification datasets, CLBP improves few-shot robustness, base-to-new generalisation, and cross-dataset transfer over the AFT, APT, and test-time defence baselines we evaluate.

\section{Limitations}
\label{sec:limitations}

We outline two aspects of CLBP whose extension we see as natural directions for future work.

\textbf{Architectural scope.}
Our analysis and empirical study focus on CLIP-style dual-tower VLMs, where the visual and textual branches are processed by separate encoders and combined through a cosine similarity. The bidirectional formulation is conceptually compatible with deeper-fusion architectures (e.g., BLIP-style models with intra-encoder cross-attention layers), but extending the closed-loop construction to such models may call for a different parametrisation of the T2V and V2T maps; we leave this extension to future work.

\textbf{Analytical tightness.}
The convergence result in Theorem~\ref{thm:conv_main} relies on a factorised Lipschitz bound $q\!\le\!L_{\mathcal{G}}L_{\mathcal{H}_{\mathbf{x}}}$ that is convenient but not tight in our setting: the empirical Lipschitz constant we measure ($q\!\sim\!10^{-4}$) is several orders of magnitude below the product of the per-branch constants. A tighter bound that better captures the contractive behaviour we observe, together with extensions to threat models beyond $\ell_\infty$ digital attacks—such as patch-based or physical-world threats—would further strengthen the theoretical picture.

\bibliographystyle{plainnat}
\bibliography{reference}

\newpage
\appendix


\section{Inference Pipeline}
\label{app:algo}

\subsection{Closed-Loop Inference with Multi-View Aggregation}
\label{app:algo_infer}

\begin{algorithm}[H]
\caption{CLBP Inference with Closed Loop and Multi-View Aggregation}
\label{alg:clbp_inference}
\begin{algorithmic}[1]
\REQUIRE \textbf{Data:} Image $\mathbf{x}$; Class set $\mathcal{C}$.
\REQUIRE \textbf{Modules:} Frozen encoders $f_{\mathcal V}, f_{\mathcal T}$; Adapters $\Phi_{\mathrm{T2V}}, \Theta_{\mathrm{V2T}}$.
\REQUIRE \textbf{params:} View generator $\mathcal{A}(\cdot)$ ($V$ views); Refinement depth $N$; Aggregation params $k, \tau_{\mathrm{mv}}$.
\ENSURE Prediction $\hat{y}$.

\vspace{0.3em}
\STATE $\{\mathbf{x}^{(v)}\}_{v=1}^V \leftarrow \mathcal{A}(\mathbf{x})$ \COMMENT{Generate augmented views}

\vspace{0.3em}
\STATE \textcolor{blue}{\textbf{// Step 0: Semantic Bootstrapping (Anchor)}}
\FORALL{$c \in \mathcal{C}$}
    \STATE $t_c^{(0)} \leftarrow \mathrm{Template}(c)$
    \STATE $\mathbf{w}_c^{(0)} \leftarrow \mathrm{Normalize}(f_{\mathcal T}(t_c^{(0)}))$
\ENDFOR
\STATE $\mathbf{W}^{(0)} \leftarrow [\mathbf{w}_1^{(0)},\dots,\mathbf{w}_C^{(0)}]^\top$

\vspace{0.3em}
\STATE \textcolor{blue}{\textbf{// Closed-Loop Inference per View}}
\FOR{$v=1$ \textbf{to} $V$}
    \STATE \textcolor{gray}{\it \# Step 1: Anchor-Guided Visual Encoding}
    \STATE $\mathbf{P}^{(0)} \leftarrow \Phi_{\mathrm{T2V}}(\mathbf{W}^{(0)})$
    \STATE $\mathbf{z}^{(0)}_v \leftarrow \mathrm{Normalize}(f_{\mathcal V}(\mathbf{x}^{(v)}\mid \mathbf{P}^{(0)}))$

    \FOR{$k=0$ \textbf{to} $N-1$}
        \STATE \textcolor{gray}{\it \# Step 2: V2T Refinement}
        \STATE $\boldsymbol{\delta}^{(k+1)}_v \leftarrow \Theta_{\mathrm{V2T}}(\mathbf{z}^{(k)}_v)$
        \FORALL{$c \in \mathcal{C}$}
             \STATE $t^{(k+1)}_{c,v} \leftarrow \mathrm{Compose}(t_c^{(0)}, \boldsymbol{\delta}^{(k+1)}_v)$
             \STATE $\mathbf{w}^{(k+1)}_{c,v} \leftarrow \mathrm{Normalize}(f_{\mathcal T}(t^{(k+1)}_{c,v}))$
        \ENDFOR
        \STATE $\mathbf{W}^{(k+1)}_v \leftarrow [\mathbf{w}^{(k+1)}_{1,v},\dots,\mathbf{w}^{(k+1)}_{C,v}]^\top$

        \STATE \textcolor{gray}{\it \# Step 1: T2V Re-Projection (Loop)}
        \STATE $\mathbf{P}^{(k+1)}_v \leftarrow \Phi_{\mathrm{T2V}}(\mathbf{W}^{(k+1)}_v)$
        \STATE $\mathbf{z}^{(k+1)}_v \leftarrow \mathrm{Normalize}(f_{\mathcal V}(\mathbf{x}^{(v)}\mid \mathbf{P}^{(k+1)}_v))$
    \ENDFOR
    \STATE $\boldsymbol{\ell}_v \leftarrow s \cdot \mathbf{z}^{(N)}_v (\mathbf{W}^{(N)}_v)^\top$
\ENDFOR

\vspace{0.3em}
\STATE \textcolor{blue}{\textbf{// Multi-View Aggregation}}
\FOR{$i=1$ \textbf{to} $V$}
    \FOR{$j=1$ \textbf{to} $V$}
        \STATE $\mathbf{S}_{ij} \leftarrow \langle \mathbf{z}^{(N)}_i, \mathbf{z}^{(N)}_j \rangle$
    \ENDFOR
    \STATE $\mathrm{score}_i \leftarrow \frac{1}{k}\sum_{j \in \mathrm{TopK}_k(\mathbf{S}_{i,:})} \mathbf{S}_{ij}$
\ENDFOR
\STATE $\alpha \leftarrow \mathrm{Softmax}(\mathrm{score} / \tau_{\mathrm{mv}})$
\STATE $\boldsymbol{\ell}_{\mathrm{agg}} \leftarrow \sum_{v=1}^V \alpha_v \, \boldsymbol{\ell}_v$

\RETURN $\arg\max_c (\boldsymbol{\ell}_{\mathrm{agg}})_c$
\end{algorithmic}
\end{algorithm}

\subsection{White-box Gradient Flow Through All Components}
\label{app:gradient_flow}
\label{app:grad}

We clarify that gradients can propagate through all components of CLBP, ensuring a genuine white-box setting.

\subsubsection{End-to-end differentiability of the closed loop.}
Every operation in Steps 0--2 is differentiable in its input: $f_{\mathcal V}$ in pixel space, $\Phi_{\mathrm{T2V}}$ as a stack of linear, normalisation, and attention blocks, $\Theta_{\mathrm{V2T}}$ as an MLP, and $f_{\mathcal T}$ in its continuous prompt embeddings.
The adversarial loss $\mathcal{L}(\boldsymbol{\ell}(\mathbf{x}),y)$ therefore admits the standard chain-rule gradient $\nabla_{\mathbf{x}}\mathcal{L} = (\partial\mathcal{L}/\partial\boldsymbol{\ell})(\partial\boldsymbol{\ell}/\partial\mathbf{z}^{(N)})(\partial\mathbf{z}^{(N)}/\partial\mathbf{x})$, with the last factor expanded through the $N$ loop iterations by autograd.

\subsubsection{Gradients through multi-view aggregation.}
For fixed views $\{\mathbf{x}^{(v)}\}_{v=1}^V$, the aggregated logits $\boldsymbol{\ell}_{\mathrm{agg}}\!=\!\sum_v\alpha_v\boldsymbol{\ell}_v$ (Eq.~\eqref{eq:mv_main}) are differentiable in each $\mathbf{x}^{(v)}$ except on the measure-zero set where the discrete $\mathrm{TopK}$ index set changes.
On a backward pass we treat the selected $\mathcal{N}_i$ as fixed; autograd then routes gradients through both the explicit dependence of $\boldsymbol{\ell}_v$ on $\mathbf{x}^{(v)}$ and the implicit dependence of $\alpha_r$ on $\mathbf{x}^{(v)}$ via $\mathrm{sc}_r$ and $\mathbf{S}_{ij}\!=\!\langle\mathbf{z}_i,\mathbf{z}_j\rangle$, yielding the full $\nabla_{\mathbf{x}^{(v)}}\mathcal{L}$ without truncation.

\subsubsection{No gradient masking}
We do not rely on non-differentiable defenses or randomized obfuscation in the forward path.
In training, the clean branch used for distillation is detached (stop-gradient) as a \emph{target},
but the adversarial branch remains fully differentiable, so attacks that maximise $\mathcal{L}(\mathrm{CLBP}(\mathbf{x}+\boldsymbol{\delta}),y)$ over $\|\boldsymbol{\delta}\|_p\!\le\!\epsilon$ (the threat model in Section~\ref{sec:prelim})
receive valid first-order gradients w.r.t.\ $\mathbf{x}$.

\label{app:train}

\subsubsection{Loss definition}
Let $\boldsymbol{\ell}^{\mathrm{adv}}_{\mathrm{agg}}, \boldsymbol{\ell}^{\mathrm{clean}}_{\mathrm{agg}}$ be the aggregated logits on adversarial and clean inputs (with the clean branch detached as a stop-gradient target).
The training loss is the sum of three terms,
\begin{equation}
  \mathcal{L} = \mathcal{L}_{\mathrm{ce}}(\boldsymbol{\ell}^{\mathrm{adv}}_{\mathrm{agg}},y)
  + \lambda_{\mathrm{mar}}\big[\max_{c\neq y}\ell^{\mathrm{adv}}_c - \ell^{\mathrm{adv}}_y + \gamma\big]_+
  + \lambda_{\mathrm{kl}}\,\mathrm{KL}\!\big(\sigma_T(\boldsymbol{\ell}^{\mathrm{clean}}_{\mathrm{agg}}) \,\big\|\, \sigma_T(\boldsymbol{\ell}^{\mathrm{adv}}_{\mathrm{agg}})\big),
\end{equation}
where $\sigma_T(\cdot)$ denotes the softmax with temperature $T$.

\subsubsection{Default hyperparameters and optimisation}
\label{app:optimization}
We set $\gamma{=}1$, $T{=}2$, $\lambda_{\mathrm{mar}}{=}\lambda_{\mathrm{kl}}{=}1$ throughout (unless noted otherwise), and train all learnable parameters with AdamW at initial learning rate $2{\times}10^{-2}$, cosine-annealed over the full schedule, with gradient clipping at $\ell_2$-norm $1.0$.
\section{Additional Experimental Results}
This section reports per-dataset numbers and additional experiments deferred from the main text: the full Base-to-New table across 11 datasets, results on the ViT-B/16 backbone, and AutoAttack evaluations.

\subsection{Base to New Results}
\label{basetonew_detail}


Table~\ref{tab:base2new_appendix} provides the detailed performance metrics for the 16-shot Base-to-New setting. We evaluate both Clean Accuracy and Robustness (under PGD-100 attack) across 11 diverse datasets. 

As observed, previous state-of-the-art methods often trade off new-class generalization for robustness (or vice versa). In contrast, \textbf{CLBP} achieves a superior Pareto frontier. Specifically, on the Average metric, CLBP improves clean accuracy on new classes by a significant margin while maintaining state-of-the-art robustness. This validates that our geometry-aware prompt learning does not overfit to the base classes, a common failure mode in adversarial prompt tuning.

\begin{table}[t!]
\caption{\textbf{Per-dataset adversarial base-to-new generalization (16-shot).} Each dataset is split into Base (training) and New (held-out) categories. All methods are trained on 16-shot Base data for 10 epochs with 2-step PGD at $\epsilon_{\mathrm{train}}{=}1/255$, $\alpha{=}1/255$, and evaluated on Base / New test splits under PGD-100 at $\epsilon_a{=}1/255$ in $\ell_\infty$. We report Top-1 clean and robust accuracy (\%); B\,=\,Base, N\,=\,New.}
\label{tab:base2new_appendix}
\centering
\renewcommand{\arraystretch}{1.15}
\setlength{\tabcolsep}{5pt}
\small
\begin{tabular}{l cc cc cc cc cc}
\toprule
& \multicolumn{2}{c}{AdvVP}
& \multicolumn{2}{c}{AdvVLP}
& \multicolumn{2}{c}{AdvMaPLe}
& \multicolumn{2}{c}{FAP}
& \multicolumn{2}{c}{\textbf{CLBP}} \\
\cmidrule(lr){2-3}\cmidrule(lr){4-5}\cmidrule(lr){6-7}\cmidrule(lr){8-9}\cmidrule(lr){10-11}
Dataset & B & N & B & N & B & N & B & N & B & N \\
\midrule
\multicolumn{11}{@{}l@{}}{\textit{Clean Accuracy (\%)}} \\[1pt]
ImageNet    & 49.87 & 44.80 & 58.40 & 48.83 & 58.47 & 48.67 & 58.10 & 47.83 & \textbf{68.87} & \textbf{58.34} \\
Caltech     & 92.83 & 88.83 & 94.40 & 83.27 & 94.87 & 84.47 & 94.07 & 76.53 & \textbf{98.19} & \textbf{90.50} \\
DTD         & 23.27 & 13.23 & 43.40 & 21.27 & 48.63 & 22.87 & 69.17 & 35.17 & \textbf{75.93} & \textbf{50.24} \\
Pets        & 32.57 & 32.30 & 38.97 & 39.67 & 60.67 & 57.90 & 87.37 & 72.13 & \textbf{93.99} & \textbf{93.79} \\
Food        &  2.27 &  2.20 & 71.37 & 68.93 & 71.40 & 69.90 & 72.37 & 68.20 & \textbf{85.55} & \textbf{85.35} \\
Flowers     & 50.43 & 45.23 & 88.90 & 49.90 & 56.53 & 30.00 & 89.30 & 45.67 & \textbf{93.07} & \textbf{61.99} \\
SUN397      & 60.20 & 62.20 & 70.23 & 63.57 & 70.57 & 63.27 & 68.47 & 61.47 & \textbf{78.35} & \textbf{72.53} \\
UCF101      &  1.77 &  2.47 & 72.77 & 49.83 & 72.80 & 50.70 & 70.37 & 47.10 & \textbf{83.45} & \textbf{63.71} \\
Aircraft    &  2.30 &  2.00 &  9.93 &  6.73 & 15.00 &  9.97 & 24.83 & 15.83 & \textbf{25.39} &  8.34 \\
Cars        & 14.87 & 15.53 & 55.60 & 46.00 & 56.47 & 46.03 & 53.97 & 42.67 & \textbf{72.31} & \textbf{61.85} \\
EuroSAT     & 18.07 & 17.77 & 49.03 & 35.63 & 54.30 & 15.90 & 87.70 & 32.80 & \textbf{91.36} & \textbf{40.97} \\
\midrule
\textit{Avg.} & 31.68 & 30.39 & 60.38 & 46.18 & 58.95 & 46.92 & 70.52 & 49.58 & \textbf{78.77} & \textbf{62.51} \\
\midrule
\addlinespace[2pt]
\multicolumn{11}{@{}l@{}}{\textit{PGD-100 Robustness (\%)}} \\[1pt]
ImageNet    & 12.27 & 12.27 & 25.33 & 21.03 & 24.93 & 20.50 & 25.83 & 21.57 & \textbf{59.45} & \textbf{50.91} \\
Caltech     & 57.17 & 49.13 & 73.90 & 56.70 & 76.23 & 57.67 & 74.20 & 50.00 & \textbf{95.80} & \textbf{86.68} \\
DTD         & 10.03 &  7.20 & 21.50 &  9.97 & 27.57 & 12.37 & 41.63 & 19.77 & \textbf{70.37} & \textbf{44.81} \\
Pets        & 12.27 & 13.37 & 16.80 & 17.50 & 31.80 & 28.90 & 34.13 & 26.07 & \textbf{89.00} & \textbf{88.53} \\
Food        &  1.27 &  1.00 & 27.90 & 24.50 & 28.43 & 24.60 & 27.57 & 24.20 & \textbf{74.41} & \textbf{73.97} \\
Flowers     & 24.63 & 15.77 & 62.80 & 21.07 & 36.70 & 11.63 & 65.50 & 18.10 & \textbf{86.51} & \textbf{53.05} \\
SUN397      & 18.50 & 21.10 & 33.87 & 29.83 & 34.10 & 29.40 & 34.63 & 30.77 & \textbf{70.10} & \textbf{65.21} \\
UCF101      &  1.73 &  1.43 & 36.37 & 20.13 & 36.77 & 18.00 & 36.63 & 18.30 & \textbf{79.01} & \textbf{53.76} \\
Aircraft    &  0.30 &  2.00 &  4.53 &  2.50 &  6.63 &  3.13 &  8.00 &  4.23 & \textbf{22.02} &  7.13 \\
Cars        &  2.77 &  3.70 & 16.97 & 12.67 & 16.57 & 12.10 & 18.60 & 14.10 & \textbf{60.64} & \textbf{49.29} \\
EuroSAT     & 17.77 & 19.97 & 38.03 & 19.47 & 15.90 &  6.83 & 51.80 & 13.40 & \textbf{88.88} & \textbf{30.59} \\
\midrule
\textit{Avg.} & 14.43 & 13.36 & 30.69 & 20.25 & 32.37 & 21.61 & 38.05 & 21.86 & \textbf{72.38} & \textbf{54.92} \\
\bottomrule
\end{tabular}
\end{table}

\subsection{Performance on ViT-B/16 Backbone}
\label{sec:vit16}

To assess the method's adaptability, we extend the evaluation to the ViT-B/16 backbone. This setting introduces two challenges: finer visual granularity (using $16\times16$ patches) and a high-intensity attack budget ($\epsilon=4/255$). As indicated in Table~\ref{tab:comparison_vitb16}, while standard baselines exhibit limited resilience under these conditions, CLBP achieves an average robustness of 48.2\%, compared to 39.9\% for the R-TPT baseline.

The sustained effectiveness of CLBP in this context can be analyzed from a multi-modal perspective. The transition to a finer patch size increases the complexity of the visual feature space. In such a high-dimensional space, uni-modal adaptation might struggle to maintain strict semantic alignment when subjected to strong perturbations. CLBP, by contrast, employs a bidirectional prompting mechanism that enforces a closed-loop interaction between visual and textual modalities. This approach theoretically acts as a mutual regularization constraint: the visual features guide the text, and the text, in turn, recalibrates the vision. This dual-modal coupling likely prevents the semantic decoupling often observed in larger architectures under attack, thereby preserving robustness even with finer architectural granularity.

\begin{table}[t!]
\caption{\textbf{Quantitative results on the ViT-B/16 backbone (zero-shot).} Models trained on ImageNet for 5 epochs with 2-step PGD at $\epsilon_{\mathrm{train}}{=}1/255$, $\alpha{=}1/255$ are evaluated zero-shot on 8 downstream datasets under PGD-100 at $\epsilon_a{=}4/255$, $\alpha{=}1/255$ in $\ell_\infty$ — a finer patch granularity ($16{\times}16$) combined with a stronger attack budget. We report Top-1 clean accuracy (Acc) and adversarial robustness (Rob, \%).}
\label{tab:comparison_vitb16}
\centering
\resizebox{\textwidth}{!}{
\begin{tabular}{lcccccccccccccccccc}
\toprule
 & \multicolumn{2}{c}{Caltech101} & \multicolumn{2}{c}{Pets} & \multicolumn{2}{c}{Cars} & \multicolumn{2}{c}{Flowers} & \multicolumn{2}{c}{Aircraft} & \multicolumn{2}{c}{DTD} & \multicolumn{2}{c}{Eurosat} & \multicolumn{2}{c}{UCF101} & \multicolumn{2}{c}{avg} \\
\cmidrule(lr){2-3} \cmidrule(lr){4-5} \cmidrule(lr){6-7} \cmidrule(lr){8-9} \cmidrule(lr){10-11} \cmidrule(lr){12-13} \cmidrule(lr){14-15} \cmidrule(lr){16-17} \cmidrule(lr){18-19}
Method & Acc & Rob & Acc & Rob & Acc & Rob & Acc & Rob & Acc & Rob & Acc & Rob & Acc & Rob & Acc & Rob & Acc & Rob \\
\midrule
CLIP & 94.0 & 0.0 & 88.3 & 0.0 & 65.5 & 0.0 & 67.4 & 0.0 & 23.9 & 0.0 & 44.4 & 0.0 & 42.2 & 0.0 & 65.2 & 0.0 & 61.4 & 0.0 \\
TPT & 94.1 & 0.0 & 87.4 & 0.0 & 66.5 & 0.0 & 69.1 & 0.0 & 23.4 & 0.0 & 46.9 & 0.0 & 42.6 & 0.0 & 67.9 & 0.0 & 62.2 & 0.0 \\
C-TPT & 93.9 & 0.0 & 88.2 & 0.0 & 65.8 & 0.0 & 69.6 & 0.0 & 23.9 & 0.0 & 45.9 & 0.0 & 42.3 & 0.0 & 65.5 & 0.0 & 61.9 & 0.0 \\
MTA & 94.3 & 72.1 & 88.0 & 51.8 & 67.7 & 18.5 & 67.4 & 27.9 & 25.0 & 4.3 & 46.5 & 16.2 & 42.5 & 1.2 & 67.5 & 27.5 & 62.3 & 27.4 \\
R-TPT & 93.7 & 82.0 & 87.2 & 60.2 & 67.0 & 34.7 & 68.7 & 44.6 & 23.9 & 13.2 & 46.4 & 32.8 & 34.7 & 8.5 & 67.2 & 43.2 & 61.1 & 39.9 \\
CLBP & 93.5 & \textbf{86.1} & 89.2 & \textbf{79.7} & 65.1 & \textbf{42.8} & 70.5 & \textbf{56.0} & 21.8 & \textbf{14.6} & 44.7 & \textbf{37.4} & 34.0 & \textbf{16.6} & 68.2 & \textbf{52.6} & 60.9 & \textbf{48.2} \\
\bottomrule
\end{tabular}
}
\end{table}

\subsection{Robustness against AutoAttack}
\label{sec:autoattack}
\noindent \textbf{Robustness Evaluation.} Following established protocols~\cite{liu2025self, zhou2024few}, we evaluate robustness using the \textbf{AutoAttack}~\cite{croce2020reliable} framework under $L_\infty$constraints with $\epsilon \in \{1/255, 2/255, 4/255\}$. We specifically employ APGD-CE and APGD-DLR to ensure reliable assessment and mitigate gradient masking with manageable computational cost.
\noindent \textbf{Results and Analysis.} As shown in Table~\ref{tab:fewshot_attack_results}, CLBP consistently outperforms baselines on OxfordPets and DTD across all budgets. Notably, CLBP exhibits superior stability at higher perturbations: on OxfordPets with $\epsilon=4/255$, it achieves \textbf{68.93\%} robust accuracy, significantly surpassing the best baseline (FAP, 12.27\%). This confirms that our closed-loop bidirectional prompting effectively enhances intrinsic robustness and generalization.


\begin{table}[t!]
\centering
\caption{\textbf{Robustness against AutoAttack on OxfordPets and DTD.} Models are trained on 16-shot data for 10 epochs with 2-step PGD at $\epsilon_{\mathrm{train}}{=}1/255$, $\alpha{=}1/255$. Following~\cite{liu2025self,zhou2024few}, evaluation uses the AutoAttack ensemble (APGD-CE + APGD-DLR) under $\ell_\infty$ with perturbation budgets $\epsilon_a \in \{1, 2, 4\}/255$. We report Top-1 clean (Acc) and robust (Rob) accuracy (\%); identical clean-Acc within a method indicates a single trained model evaluated under three attack budgets.}
\label{tab:fewshot_attack_results}
\vspace{2mm}
\small
\begin{tabular}{lcccccccc}
\toprule
\multirow{2}{*}{\textbf{Dataset}} & \multirow{2}{*}{\textbf{Method}} & \multicolumn{2}{c}{\textbf{$\epsilon=1$}} & \multicolumn{2}{c}{\textbf{$\epsilon=2$}} & \multicolumn{2}{c}{\textbf{$\epsilon=4$}} \\
\cmidrule(lr){3-4} \cmidrule(lr){5-6} \cmidrule(lr){7-8}
& & \textbf{Acc (\%)} & \textbf{Rob (\%)} & \textbf{Acc (\%)} & \textbf{Rob (\%)} & \textbf{Acc (\%)} & \textbf{Rob (\%)} \\
\midrule
\multirow{5}{*}{OxfordPets} 
& AdvVP & 78.00 & 17.32 & 78.00 & 3.98 & 78.00 & 0.34 \\
& AdvVLP & 79.18 & 26.40 & 79.18 & 19.13 & 79.18 & 9.26 \\
& AdvMaPLe & 78.00  & 26.24 & 78.00 & 19.00 & 78.00 & 9.48 \\
& FAP & 80.02  & 30.19 & 80.02  & 22.82 & 80.02 & 12.27 \\
& CLBP & \textbf{85.04} & \textbf{78.60} & \textbf{85.04} & \textbf{75.20} & \textbf{85.04} & \textbf{68.93} \\
\midrule
\multirow{5}{*}{DTD} 
& AdvVP & 32.74 & 8.62 & 32.74 & 5.62 & 32.74 & 2.07 \\
& AdvVLP & 32.57 & 13.62 & 32.57 & 12.50 & 32.57 & 10.02 \\
& AdvMaPLe & 28.31 & 13.34 & 28.31 & 11.89 & 28.31 & 9.47 \\
& FAP & 30.56 & 14.29 & 30.56 & 12.87 & 30.56 & 10.47 \\
& CLBP & \textbf{44.86} & \textbf{38.95} & \textbf{44.86} & \textbf{36.41} & \textbf{44.86} & \textbf{31.86} \\
\bottomrule
\end{tabular}
\end{table}

\subsection{Robustness against CW Attack}
\label{app:cw}

We additionally evaluate CLBP under a margin-based Carlini--Wagner attack~\cite{carlini2017towards} adapted to the $\ell_\infty$ threat model with $\epsilon{=}1/255$, following the protocol of prior CLIP-robustness work.
Table~\ref{tab:cw_results} reports clean and robust accuracy on 8 datasets across 9 baselines spanning vanilla CLIP, adversarial fine-tuning (CLIP-FT, PMG-AFT, FARE, TeCoA), test-time defences (Anti-adv, TTC, SCC), and CLBP.
CLBP attains the highest robust accuracy on every reported dataset; the average margin over SCC, the strongest baseline, is $+10.0\%$, with the largest single-dataset gap on Caltech101 ($+10.35\%$) and OxfordPets ($+5.34\%$).
The clean accuracy of CLBP is also comparable to or above the strongest test-time baseline on most datasets, indicating that the robustness gain under CW attack is not paid for by a clean-accuracy drop.

\begin{table}[h!]
\centering
\caption{\textbf{Robustness under the CW-$\ell_\infty$ attack.} All training-based baselines follow the cross-dataset transfer protocol (trained on ImageNet for 5 epochs with 2-step PGD at $\epsilon_{\mathrm{train}}{=}1/255$, $\alpha{=}1/255$) and are evaluated zero-shot on 8 downstream datasets under the margin-based Carlini--Wagner attack at $\epsilon_a{=}1/255$ in $\ell_\infty$. We report Top-1 clean / robust accuracy (\%); the best robust accuracy in each row is in \textbf{bold}.}
\label{tab:cw_results}
\small
\setlength{\tabcolsep}{2pt}
\renewcommand{\arraystretch}{1.05}
\resizebox{\textwidth}{!}{%
\begin{tabular}{l ccccccccc}
\toprule
\multirow{2}{*}{\textbf{Dataset}}
  & CLIP        & CLIP-FT     & PMG-AFT     & FARE
  & TeCoA       & Anti-adv    & TTC         & SCC
  & \textbf{CLBP (Ours)} \\
  & Clean / Rob & Clean / Rob & Clean / Rob & Clean / Rob
  & Clean / Rob & Clean / Rob & Clean / Rob & Clean / Rob
  & Clean / Rob \\
\midrule
StanfordCars & 52.02 / 2.38  & 42.11 / 2.04  & 25.44 / 10.53 & 38.68 / 9.14
             & 20.91 / 8.74  & 36.21 / 4.76  & 48.16 / 30.38 & 51.19 / 37.96
             & \textbf{60.05} / \textbf{47.68} \\
Caltech101   & 85.66 / 20.88 & 83.63 / 15.95 & 75.45 / 61.58 & 80.95 / 54.86
             & 71.68 / 56.23 & 84.02 / 41.47 & 86.53 / 66.17 & 86.44 / 76.59
             & \textbf{91.12} / \textbf{86.94} \\
DTD          & 40.64 / 2.87  & 36.49 / 2.77  & 21.76 / 13.72 & 32.07 / 14.36
             & 25.16 / 16.28 & 38.92 / 6.06  & 36.98 / 27.39 & 37.34 / 33.35
             & \textbf{44.86} / \textbf{38.83} \\
Flowers102   & 65.46 / 1.35  & 53.37 / 0.80  & 37.00 / 21.34 & 47.98 / 17.25
             & 36.80 / 21.13 & 62.66 / 8.06  & 64.16 / 36.84 & 64.16 / 49.76
             & 63.58 / \textbf{54.61} \\
OxfordPets   & 87.44 / 1.64  & 84.14 / 1.14  & 65.88 / 39.28 & 79.37 / 33.85
             & 62.12 / 37.91 & 80.62 / 22.99 & 83.35 / 57.15 & 86.48 / 75.06
             & 85.04 / \textbf{80.40} \\
ImageNet     & 59.69 / 1.46  & 54.24 / 1.27  & 36.12 / 19.42 & 48.79 / 27.71
             & 34.89 / 18.28 & 54.27 / 9.37  & 49.49 / 36.01 & 56.03 / 45.75
             & \textbf{65.96} / \textbf{55.24} \\
Food101      & 84.88 / 1.09  & 64.86 / 0.55  & 36.61 / 16.57 & 55.31 / 12.93
             & 29.98 / 12.87 & 75.81 / 15.03 & 82.18 / 54.65 & 82.13 / 59.73
             & 77.46 / \textbf{65.22} \\
SUN397       & 58.50 / 1.75  & 55.73 / 1.48  & 37.98 / 20.39 & 52.42 / 15.73
             & 36.69 / 18.36 & 56.00 / 8.85  & 55.13 / 39.44 & 58.25 / 48.99
             & \textbf{65.24} / \textbf{54.26} \\
\midrule
\textbf{Avg.}
             & 66.79 / 4.18  & 59.32 / 3.25  & 42.03 / 25.35 & 54.45 / 23.23
             & 39.78 / 23.73 & 61.06 / 14.57 & 63.25 / 43.50 & 65.25 / 53.40
             & \textbf{69.16} / \textbf{60.40} \\
\bottomrule
\end{tabular}}
\end{table}

\subsection{Adaptive EOT-PGD Evaluation}
\label{app:eot}

The multi-view aggregation in Eq.~\eqref{eq:mv_main} introduces stochasticity through the per-view augmentations and the TopK consistency selection.
Following the adaptive-attack protocol introduced in Section~\ref{sec:experiments}, we estimate expected gradients over the augmentation distribution at eot$\,\in\!\{1,4\}$~\cite{athalye2018obfuscated}, against the full stochastic CLBP pipeline (no fixed seed during the backward pass), and additionally report the few-shot setting where each method is tuned on $16$ shots per class.
The robust accuracy is reported under PGD-100 with $\epsilon{=}1/255$.

\begin{table}[h!]
  \centering
  \small
  \setlength{\tabcolsep}{8pt}
  \renewcommand{\arraystretch}{1.1}
  \caption{\textbf{Adaptive EOT-PGD~\cite{athalye2018obfuscated} on the unaltered stochastic CLBP pipeline.} Both methods are evaluated under PGD-100 at $\epsilon_a{=}1/255$ in $\ell_\infty$, with gradients estimated via Expectation-over-Transformation across $\mathrm{eot}{\in}\{1,4\}$ augmentation samples. \emph{Zero-shot} rows use the cross-dataset transfer model (trained on ImageNet, 5 epochs, 2-step PGD at $\epsilon_{\mathrm{train}}{=}1/255$); \emph{16-shot} rows use the per-dataset 16-shot adapted model (10 epochs, same attack). Highlighted cells correspond to the strongest adaptive setting $\mathrm{eot}{=}4$.}
  \label{tab:eot_appendix}
  \begin{tabular}{l l cc cc}
    \toprule
    \multirow{2}{*}{Setting} & \multirow{2}{*}{Method}
        & \multicolumn{2}{c}{DTD} & \multicolumn{2}{c}{Caltech101} \\
    \cmidrule(lr){3-4}\cmidrule(lr){5-6}
    & & eot=1 & eot=4 & eot=1 & eot=4 \\
    \midrule
    \multirow{2}{*}{zero-shot}
        & R-TPT  & 15.72 & \cellcolor{HiRed}12.77
                 & 51.27 & \cellcolor{HiRed}45.80 \\
        & \textbf{CLBP} & \textbf{34.28} & \cellcolor{HiRed}\textbf{28.72}
                        & \textbf{79.55} & \cellcolor{HiRed}\textbf{73.14} \\
    \midrule
    \multirow{2}{*}{few-shot}
        & R-TPT  & 20.69 & \cellcolor{HiRed}17.31
                 & 65.92 & \cellcolor{HiRed}57.44 \\
        & \textbf{CLBP} & \textbf{22.93} & \cellcolor{HiRed}\textbf{19.28}
                        & \textbf{68.64} & \cellcolor{HiRed}\textbf{60.37} \\
    \bottomrule
  \end{tabular}
\end{table}

\noindent\textbf{Discussion.}
Increasing eot from $1$ to $4$ tightens the attack and reduces robust accuracy for both methods, but the relative ordering is preserved across all eight cells: CLBP retains the highest robust accuracy under every (setting, dataset, eot) configuration, with the gap reaching $+27.34\%$ on Caltech101 zero-shot at eot$\,{=}\,4$.
Two observations follow.
First, the gain of CLBP is not contingent on a fixed-seed evaluation: applying EOT against the random augmentation pipeline does not collapse the gap.
Second, the closed-loop bidirectional correction continues to provide a measurable improvement in the few-shot regime, where the static-prompt baseline R-TPT is already strained.
We refer to App.~\ref{app:gradient_flow} for a description of how gradients flow through the TopK selection in the multi-view aggregation, ruling out the standard masking pattern in which discrete selection is made non-differentiable.

\subsection{Sensitivity to Soft-Prompt Length}
\label{app:prompt_length}

The number of learnable context tokens $L$ is the only hyper-parameter shared between CLBP and standard prompt tuning, and capacity-versus-stability arguments in prior APT work suggest it can affect the trade-off between clean and robust accuracy.
We therefore sweep $L\!\in\!\{2,4,6,8\}$ on DTD with all other settings fixed (training: $\epsilon{=}1/255$, PGD-2, $10$ epochs; evaluation: PGD-100 at the same $\epsilon$).

\begin{table}[h!]
\centering
\small
\setlength{\tabcolsep}{14pt}
\renewcommand{\arraystretch}{1.1}
\caption{\textbf{Soft-prompt context length ablation on DTD.} Trained on 16-shot data for 10 epochs with 2-step PGD at $\epsilon_{\mathrm{train}}{=}1/255$, $\alpha{=}1/255$, evaluated under PGD-100 at $\epsilon_a{=}1/255$ in $\ell_\infty$. We report Top-1 clean and robust accuracy (\%) for context lengths $L{\in}\{2,4,6,8\}$.}
\label{tab:prompt_length}
\begin{tabular}{c cc}
  \toprule
  Context length $L$ & Clean (\%) & Rob.\ (\%) \\
  \midrule
  $2$ & 64.36          & 57.51          \\
  $4$ & 62.88          & 57.57          \\
  $6$ & \textbf{65.66} & \textbf{60.22} \\
  $8$ & 64.72          & 59.22          \\
  \bottomrule
\end{tabular}
\end{table}

Both clean and robust accuracy stay within a $\sim\!3\%$ band, with $L{=}6$ marginally best and $L{=}2$ already competitive.
This indicates that CLBP is not sensitive to the specific prompt length, and the trend is consistent with the capacity-versus-stability trade-off observed in standard prompt tuning: very short prompts under-fit, longer prompts saturate.
We retain $L{=}4$ as a conservative default in all main experiments to align with the protocol used by APT baselines.

\section{Analysis of Representation Geometry Drift}
\label{sec:cka_drift}

To understand how our defense mechanism affects the internal feature representations of the CLIP image encoder, we employ Centered Kernel Alignment (CKA) to measure layer-wise similarity. Figure~\ref{figs:cka_drift} visualizes the Linear CKA scores between the original pre-trained encoder and models fine-tuned with different methods.

A sharp decline in CKA scores, particularly in the middle transformer blocks (Layers 4--8), indicates a significant departure from the original pre-trained geometry. This phenomenon is most pronounced in methods using larger perturbation budgets (e.g., TeCoA-eps4). CLBP maintains a higher CKA score in early layers, suggesting that it preserves low-level feature extraction capabilities while adapting the high-level semantic layers for robustness.

\begin{figure}[t!] 
    \centering
    
    \includegraphics[width=0.6\linewidth]{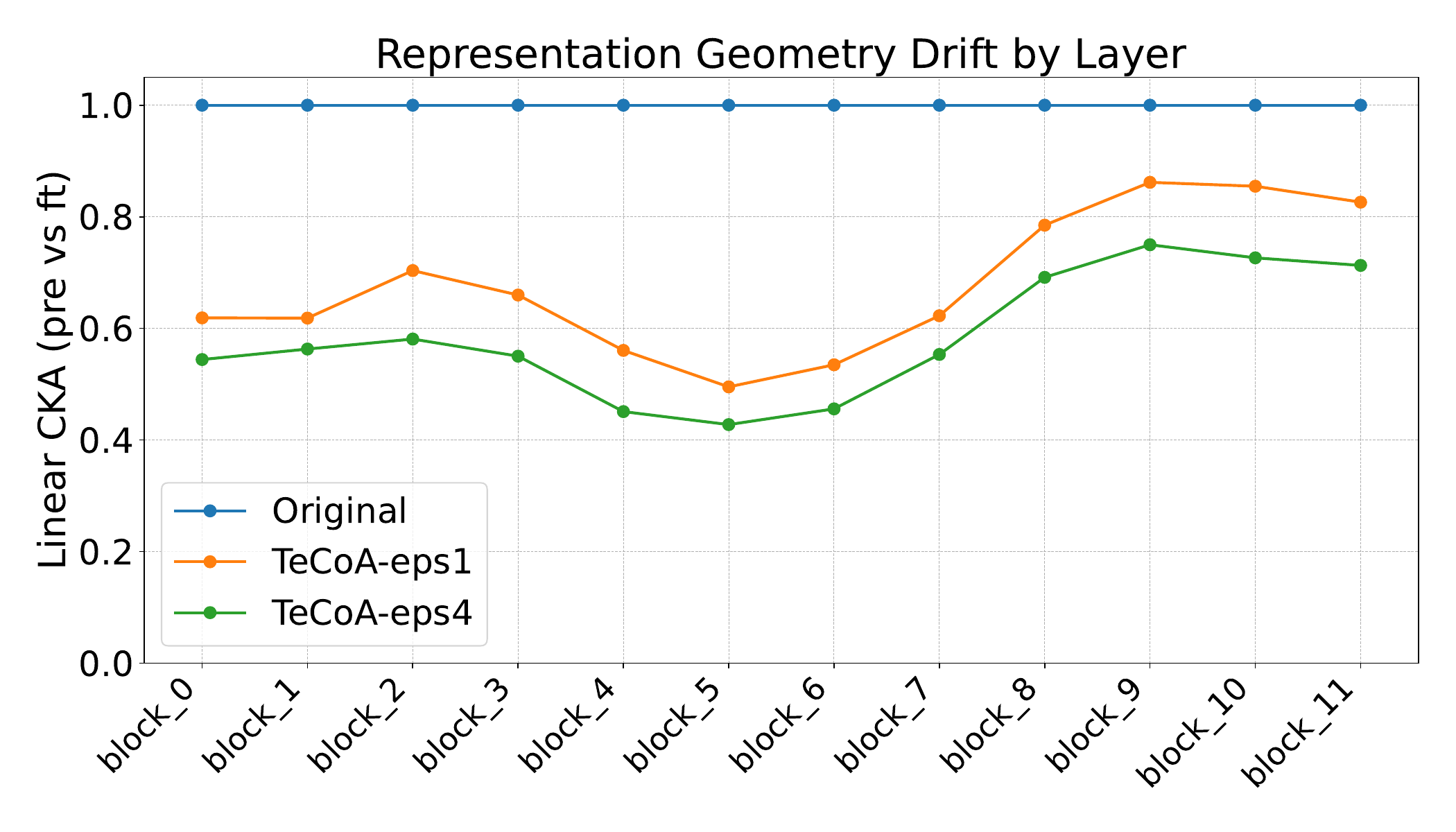}
    
    \caption{\textbf{Representation geometry drift by layer.} Layer-wise feature consistency measured by Linear Centered Kernel Alignment (CKA) between the pre-trained CLIP encoder and adversarially fine-tuned variants on ImageNet. The drop in similarity in middle layers (blocks 4--8) indicates geometric distortion introduced during robust fine-tuning, more pronounced under stronger training budgets ($\epsilon_{\mathrm{train}}{=}4/255$).}
    \label{figs:cka_drift}
    
\end{figure}


\section{Detailed Analysis of Defense Mechanism}
\label{sec:appendix_analysis}

This appendix examines how CLBP affects the geometry of the predictions: the logit margin (\S\ref{subsec:semantic_rectification}) and the internal feature distance (\S\ref{subsec:internal_stability}).

\subsection{Semantic Rectification and Margin Recovery}
\label{subsec:semantic_rectification}

We measure two diagnostics: (i) the \emph{logit margin}, the difference between the logit of the correct class and the largest competitor, and (ii) the cosine similarities between the image feature and each text prototype.
Adversarial attacks collapse the logit margins, often pushing them below zero, while CLBP restores the margin distribution to and beyond the clean baseline (Fig.~\ref{fig:semantic_analysis}, left).
The cosine-similarity decomposition (Fig.~\ref{fig:semantic_analysis}, right) shows that this restoration comes from an increased similarity to the correct prototype together with a decreased similarity to competing prototypes, rather than from a simple denoising of the visual feature.

\begin{figure}[t!]
    \centering

    \begin{subfigure}[t]{0.49\columnwidth}
        \centering
        \includegraphics[height=4.2cm, keepaspectratio]{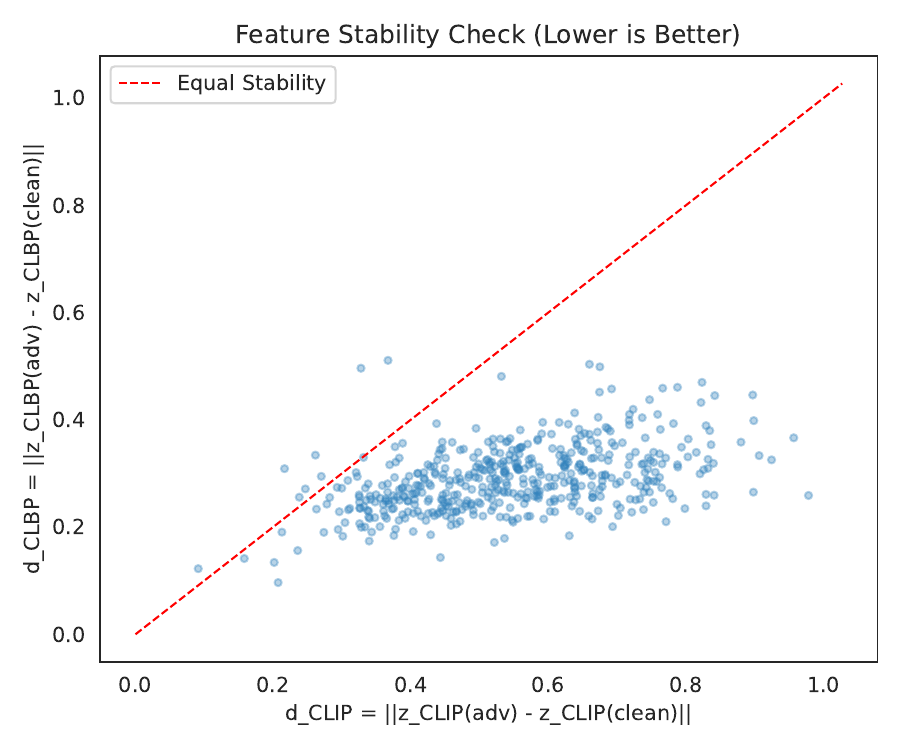}
        \caption{Pairwise internal distance comparison.}
        \label{fig:stability_scatter}
    \end{subfigure}
    \hfill
    \begin{subfigure}[t]{0.49\columnwidth}
        \centering
        \includegraphics[height=4.2cm, keepaspectratio]{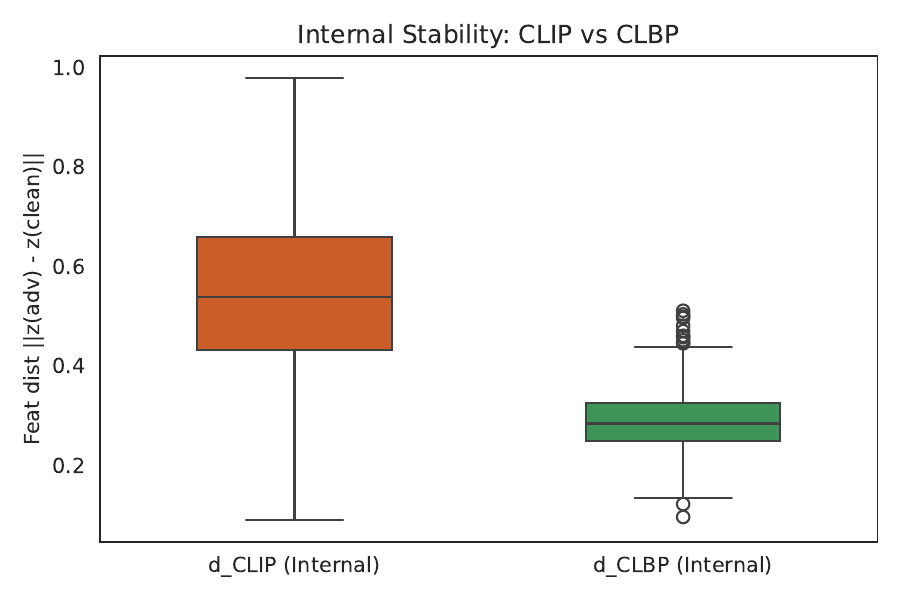}
        \caption{Distribution of feature shifts.}
        \label{fig:stability_box}
    \end{subfigure}

    \caption{\textbf{Internal feature stability under adversarial perturbations.}
    We compare $\ell_2$ distances $\Delta_{\mathcal M}{=}\|\mathbf{z}_{\mathcal M}(\mathbf{x}^{\mathrm{adv}})-\mathbf{z}_{\mathcal M}(\mathbf{x}^{\mathrm{clean}})\|_2$ between clean and adversarial visual features within the same model $\mathcal M\in\{\mathrm{CLIP},\mathrm{CLBP}\}$, with $\mathbf{x}^{\mathrm{adv}}$ generated by PGD-100 at $\epsilon_a{=}1/255$ in $\ell_\infty$ on DTD.
    Most samples satisfy $\Delta_{\mathrm{CLBP}}{<}\Delta_{\mathrm{CLIP}}$ (points below the diagonal in (a)), and CLBP reduces both the median and variance of feature shifts (b), demonstrating a damping effect and improved local smoothness around natural images.}
    \label{fig:stability_analysis}
\end{figure}

\begin{figure}[t!]
    \centering

    \begin{subfigure}[t]{0.39\linewidth} 
        \centering
        \includegraphics[width=\linewidth]{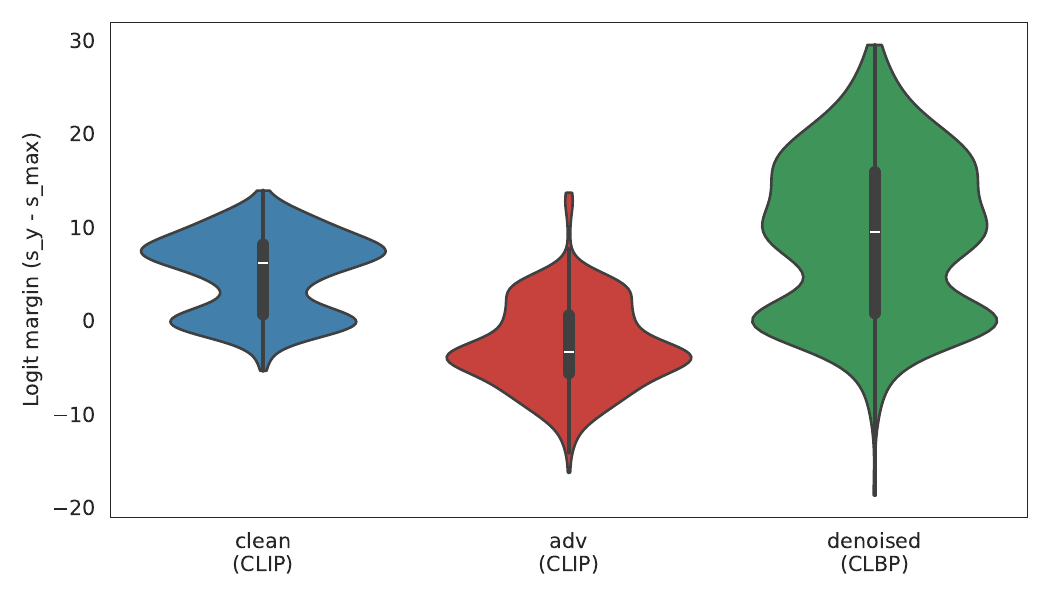}
        \caption{Logit margins.} 
        \label{fig:margin}
    \end{subfigure}
    \hfill
    \begin{subfigure}[t]{0.59\linewidth}
        \centering
        \includegraphics[width=\linewidth]{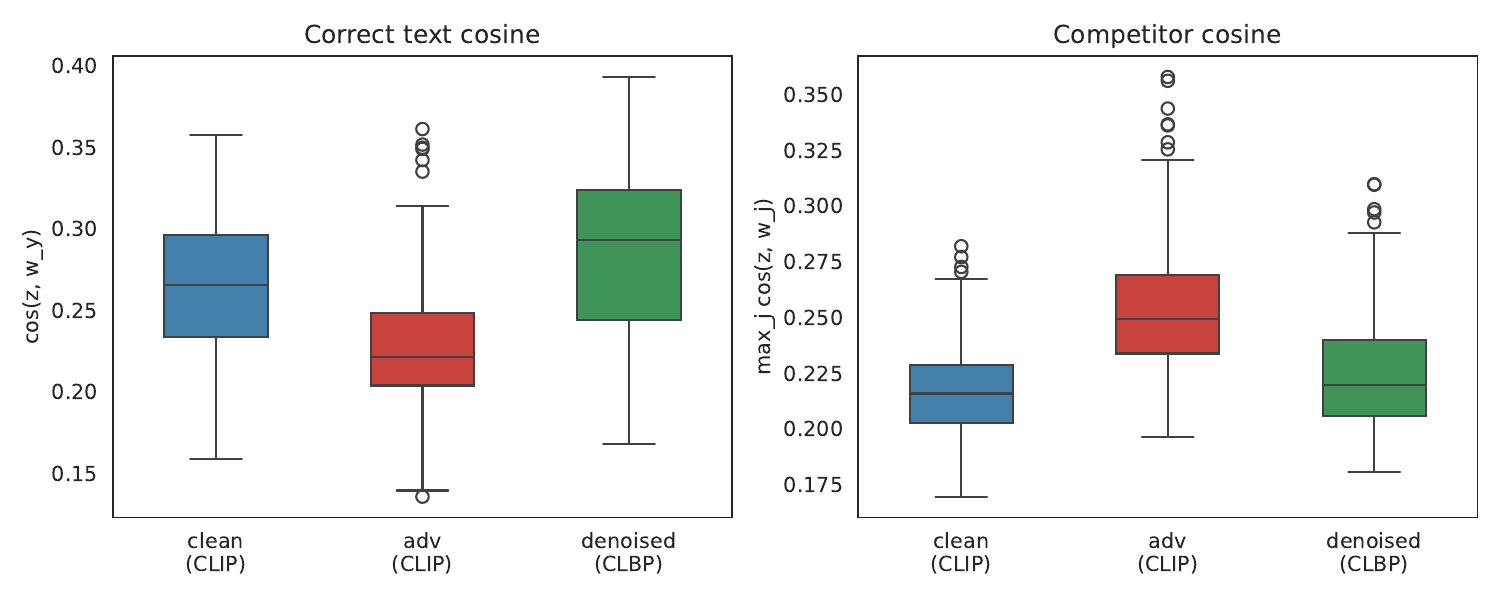}
        \caption{Cosine similarity decomposition.}
        \label{fig:cosine}
    \end{subfigure}



    \caption{\textbf{Semantic rectification analysis on DTD.} 16-shot CLBP (10 epochs, 2-step PGD at $\epsilon_{\mathrm{train}}{=}1/255$) evaluated under PGD-100 at $\epsilon_a{=}1/255$ in $\ell_\infty$.
    (a) Logit margin distributions $\ell_y - \max_{c\neq y}\ell_c$ over the test set: clean (CLIP), adversarial (CLIP), and adversarial after the closed-loop denoising (CLBP).
    (b) Cosine-similarity decomposition into the correct-class similarity $\cos(\mathbf{z},\mathbf{w}_y)$ and the strongest competitor similarity $\max_{j\neq y}\cos(\mathbf{z},\mathbf{w}_j)$.
    \textbf{Interpretation:} CLBP restores margins and re-aligns features rather than reconstructing pixels, consistent with semantic rectification.}
    \label{fig:semantic_analysis}
\end{figure}

\subsection{Internal Feature Stability}
\label{subsec:internal_stability}

We measure the sensitivity of the vision encoder to input perturbations through the internal perturbation distance $\Delta_{(\cdot)}=\|z_{(\cdot)}(x_{\mathrm{adv}})-z_{(\cdot)}(x_{\mathrm{clean}})\|_2$ within the same model, computed for vanilla CLIP and CLBP.
Figure~\ref{fig:stability_analysis} shows that $\Delta_{\mathrm{CLBP}}\!<\!\Delta_{\mathrm{CLIP}}$ on most samples (points below the diagonal), and $\Delta_{\mathrm{CLBP}}$ has lower median and variance.
This is consistent with a smaller effective local Lipschitz constant around natural images, in line with the analysis of $\mathcal{T}_{\mathbf{x}}$ in Section~\ref{sec:theory_main}.

\section{Multi-View Aggregation: View-Count Ablation}
\label{app:viewcount}

We expand each input image into $V$ random augmentations $\{\mathbf{x}^{(v)}\}_{v=1}^V$ and aggregate per-view logits using the consistency-weighted softmax in Eq.~\eqref{eq:mv_main}.
This appendix reports how robustness and latency depend on $V$ on DTD under PGD-100 with $\epsilon{=}4/255$ (training settings: $\epsilon{=}1/255$, $\alpha{=}1/255$, PGD-2, $10$ epochs).

\begin{figure}[t!]
  \centering
  \includegraphics[width=0.7\linewidth]{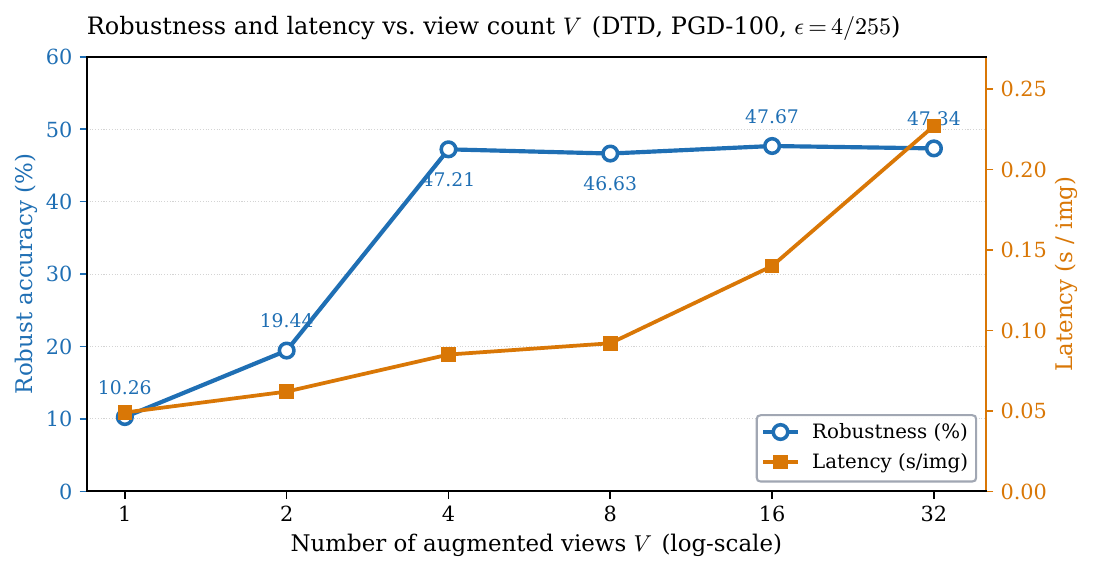}
  \caption{\textbf{Robust accuracy and per-image latency of CLBP on DTD as a function of the augmented-view count $V$.} Trained on 16-shot data with 2-step PGD at $\epsilon_{\mathrm{train}}{=}1/255$, $\alpha{=}1/255$ for $10$ epochs; evaluated under PGD-100 at the stronger $\epsilon_a{=}4/255$ in $\ell_\infty$. All six points are measured on the same trained model.}
  \label{figs:plot_views}
\end{figure}

\paragraph{Effect of $V$.}
The robustness curve in Fig.~\ref{figs:plot_views} is non-monotonic in shape but follows a simple pattern.
At $V{=}1$, the TopK weighting reduces to a single-view prediction and gives $10.26\%$.
At $V{=}2$ the score is $19.44\%$.
At $V{=}4$, which matches the default $K{=}4$ in $\mathrm{TopK}(\mathbf{S}_{i,:})$, the score reaches $47.21\%$.
For $V\!\in\!\{8,16,32\}$ the scores are $46.63\%$, $47.67\%$, and $47.34\%$, all within a $1.1\%$ band, while latency grows from $0.092$ to $0.227$\,s/img.
The flatness of the curve beyond $V{=}8$ is consistent with the outlier-suppression bound in Appendix~\ref{app:morelemmas}.

\paragraph{Sensitivity to test-time sampling.}
The non-monotonic ordering $V{=}16{>}V{=}32$ in Fig.~\ref{figs:plot_views} could reflect test-time augmentation sampling rather than a structural property of $V$.
We re-evaluated the same trained model at $V\!\in\!\{16,32\}$ with three independent random seeds (Tab.~\ref{tab:view_seed}).
Per-seed values vary by up to $0.7\%$, the ranges overlap, and the seed-averaged means differ by $0.09\%$.

\begin{table}[h!]
  \centering
  \caption{\textbf{Sensitivity of the view aggregation to test-time augmentation seeds.} The same 16-shot CLBP model (DTD, 10 epochs, 2-step PGD at $\epsilon_{\mathrm{train}}{=}1/255$) is evaluated under PGD-100 at $\epsilon_a{=}4/255$ in $\ell_\infty$ with three independent random seeds for view sampling. We report Top-1 robust accuracy (\%) for $V{\in}\{16,32\}$.}
  \label{tab:view_seed}
  \small
  \setlength{\tabcolsep}{8pt}
  \begin{tabular}{l c c c c c}
    \toprule
    $V$ & seed 0 & seed 1 & seed 2 & mean & std \\
    \midrule
    $16$ & $47.67$ & $46.99$ & $47.70$ & $47.45$ & $0.40$ \\
    $32$ & $47.34$ & $47.28$ & $47.46$ & $47.36$ & $0.09$ \\
    \bottomrule
  \end{tabular}
\end{table}

\paragraph{Shared-pool evaluation.}
The runs above use disjoint samples of views for $V{=}16$ and $V{=}32$.
We also generated a single pool of $32$ augmentations per image and evaluated the same model on the first $8$, $16$, or all $32$ views of that pool (Tab.~\ref{tab:view_pool}).
Under this nested protocol the gap from $V{=}16$ to $V{=}32$ is $0.06\%$, while latency grows $2.5\times$.

\begin{table}[h!]
  \centering
  \caption{\textbf{Shared-pool view ablation on DTD.} A single pool of $32$ augmented views per image is generated, and the first $V$ views are used for prediction. The same 16-shot CLBP model (10 epochs, 2-step PGD at $\epsilon_{\mathrm{train}}{=}1/255$) is evaluated under PGD-100 at $\epsilon_a{=}4/255$.}
  \label{tab:view_pool}
  \small
  \setlength{\tabcolsep}{10pt}
  \begin{tabular}{l c c c}
    \toprule
    $V$ (first $V$ from a shared pool) & $8$ & $16$ & $32$ \\
    \midrule
    Robust accuracy (\%) & $47.28$ & $\mathbf{47.40}$ & $47.34$ \\
    Latency (s/img) & $\mathbf{0.092}$ & $0.140$ & $0.227$ \\
    \bottomrule
  \end{tabular}
\end{table}

\paragraph{Choice of $V$ for main experiments.}
The main-paper results are reported at $V{=}32$.
This choice is governed by two considerations.
First, the multi-view test-time defences against which we benchmark (R-TPT, SCC) evaluate at $V{=}32$, and using the same view budget rules out a confound in which a smaller $V$ for CLBP would be compared against a larger $V$ for the baselines.
Second, the analysis above shows that CLBP's robust accuracy on DTD is saturated for $V\!\geq\!8$: the spread across $V\!\in\!\{8,16,32\}$ is within $1.1\%$, and the seed-control and shared-pool experiments above place this spread within test-time sampling noise.
A consequence of this saturation is that the main-table numbers reported at $V{=}32$ are not driven by the larger view budget; the same numbers, up to seed-level fluctuation, would be obtained at $V{=}8$ at $0.092$\,s/img instead of $0.227$\,s/img.
We therefore present $V{=}32$ as the conservative protocol setting for the main results, and $V{=}8$ as a deployment-friendly alternative when inference cost is the binding constraint.

\section{Efficiency Analysis}
\label{sec:efficiency}

We study how the closed-loop iteration depth $T$ trades latency for robustness, with the number of augmented views fixed at $V{=}32$ throughout this section (the dependence on $V$ is studied separately in Appendix~\ref{app:viewcount}).
Table~\ref{tab:efficiency} reports per-image latency and PGD-100 robustness ($\epsilon{=}4/255$) on DTD for $T{=}1$ and $T{=}3$.
Going from $T{=}1$ to $T{=}3$ raises robust accuracy by $0.71\%$ and roughly triples the latency, so we use $T{=}1$ in all main experiments.

\begin{table}[t!]
    \centering
    \caption{\textbf{Per-image inference latency vs.\ PGD-100 robust accuracy of CLBP on DTD as a function of the closed-loop iteration depth $T$.} Trained on 16-shot data (10 epochs, 2-step PGD at $\epsilon_{\mathrm{train}}{=}1/255$); evaluated under PGD-100 at $\epsilon_a{=}4/255$. The number of augmented views is fixed at $V{=}32$.}
    \label{tab:efficiency}
    \resizebox{0.55\linewidth}{!}{
    \begin{tabular}{lcc}
        \toprule
        \textbf{Configuration} & \textbf{Latency (s/img)} & \textbf{Robustness (\%)} \\
        \midrule
        CLBP, $T{=}1$ & \textbf{0.227} & 45.98 \\
        CLBP, $T{=}3$ & 0.637 & \textbf{46.69} \\
        \bottomrule
    \end{tabular}
    }
    \vspace{-0.1cm}
\end{table}

    
    

\section{Proofs and Additional Lemmas}
\label{app:proofs}

\subsection{Setup: Reachable Set and Regularity Assumptions}
\label{app:setup}

We recall the closed-loop operator from Eq.~\eqref{eq:Tx}: $\mathcal{T}_{\mathbf{x}} \!\triangleq\! \mathcal{H}_{\mathbf{x}}\!\circ\!\mathcal{G}$ with $\mathbf{z}^{(k+1)}\!=\!\mathcal{T}_{\mathbf{x}}(\mathbf{z}^{(k)})$, where $\mathcal{G}\!:\!\mathbb{S}^{d-1}\!\to\!\mathbb{R}^{C\times d}$ and $\mathcal{H}_{\mathbf{x}}\!:\!\mathbb{R}^{C\times d}\!\to\!\mathbb{S}^{d-1}$ are defined in Section~\ref{sec:theory_main}.
Starting from the anchor-induced initialisation $\mathbf{z}^{(0)}\!=\!\mathcal{H}_{\mathbf{x}}(\mathbf{W}^{(0)})$ where $\mathbf{w}_c^{(0)}\!=\!\mathrm{Norm}(f_{\mathcal T}(\mathrm{Template}(c)))$ as in Section~\ref{sec:clbp}, define the \emph{reachable set} as the smallest closed, $\mathcal{T}_{\mathbf{x}}$-invariant subset of $\mathbb{S}^{d-1}$ that contains $\mathbf{z}^{(0)}$:
\begin{equation}
  \Omega
  \;\triangleq\;
  \overline{\left\{\mathbf{z}^{(k)}:\ \mathbf{z}^{(k+1)}=\mathcal{T}_{\mathbf{x}}(\mathbf{z}^{(k)}),\ k\ge 0\right\}},
  \qquad
  \mathcal{T}_{\mathbf{x}}(\Omega)\subseteq \Omega.
  \label{eq:Omega}
\end{equation}
Since $\Omega$ is closed in $\mathbb{S}^{d-1}\!\subset\!\mathbb{R}^d$, the pair $(\Omega,\|\cdot\|_2)$ is a complete metric space.
Our analysis uses one regularity quantity, the local Lipschitz constant of the composed operator,
$q \!\triangleq\! \mathrm{Lip}_\Omega(\mathcal{T}_{\mathbf{x}}) \!=\! \sup_{\mathbf{z}_1\neq\mathbf{z}_2\in\Omega} \|\mathcal{T}_{\mathbf{x}}(\mathbf{z}_1)\!-\!\mathcal{T}_{\mathbf{x}}(\mathbf{z}_2)\|_2 / \|\mathbf{z}_1\!-\!\mathbf{z}_2\|_2$,
and the local-contraction condition $q\!<\!1$.

\paragraph{Factorised upper bound (loose).}
A useful but \emph{not} sharp bound follows from per-branch Lipschitz constants:
\begin{align}
  \|\mathcal{G}(\mathbf{z}_1)-\mathcal{G}(\mathbf{z}_2)\|_F
  &\le L_G \|\mathbf{z}_1-\mathbf{z}_2\|_2,
  &&\forall\, \mathbf{z}_1,\mathbf{z}_2\!\in\!\Omega,
  \label{eq:LG_assump}\\
  \|\mathcal{H}_{\mathbf{x}}(\mathbf{W}_1)-\mathcal{H}_{\mathbf{x}}(\mathbf{W}_2)\|_2
  &\le L_H \|\mathbf{W}_1-\mathbf{W}_2\|_F,
  &&\forall\, \mathbf{W}_1,\mathbf{W}_2\!\in\!\mathcal{G}(\Omega),
  \label{eq:LH_assump}
\end{align}
which compose to $q\!\le\!L_G L_H$.
This factorised version is convenient for the proofs below but is generally loose for CLBP, because the V2T branch updates only a few prompt tokens through an anchor-constrained $\mathrm{Compose}(\cdot)$ bottleneck; the loop-direction sensitivity is much smaller than $L_G L_H$ (Tab.~\ref{tab:empirical_lipschitz}).

\paragraph{Why this is consistent with CLBP.}
The local contraction is structurally plausible: anchor initialisation pins the iterates near pretrained semantics, unit-norm projection bounds the visual side to a compact set, and the lightweight adapters (built from linear layers, attention, normalisation, and pointwise activations) are locally Lipschitz on any compact set.
Empirical verification follows in Appendix~\ref{app:empirical_lipschitz}.

\subsection{Empirical Verification of the Local Contraction}
\label{app:empirical_lipschitz}

We measure the per-branch Lipschitz constants $L_G,L_H$ and a finite-difference estimate of the composed constant $q$ on held-out clean and adversarial trajectories of the trained CLBP model (DTD, ViT-B/32; adversarial samples generated with $\epsilon{=}1/255$).
For each pair of consecutive iterates we record $\widehat{q}=\|\mathcal{T}_{\mathbf{x}}(\mathbf{z}_1)-\mathcal{T}_{\mathbf{x}}(\mathbf{z}_2)\|_2/\|\mathbf{z}_1-\mathbf{z}_2\|_2$, $\widehat{L}_G=\|\mathcal{G}(\mathbf{z}_1)-\mathcal{G}(\mathbf{z}_2)\|_F/\|\mathbf{z}_1-\mathbf{z}_2\|_2$, and $\widehat{L}_H=\|\mathcal{H}_{\mathbf{x}}(\mathbf{W}_1)-\mathcal{H}_{\mathbf{x}}(\mathbf{W}_2)\|_2/\|\mathbf{W}_1-\mathbf{W}_2\|_F$, and aggregate over the trajectory ensemble.

\begin{table}[h!]
  \centering
  \caption{\textbf{Empirical Lipschitz constants of the closed loop on DTD (ViT-B/32).} Finite-difference estimates over inference trajectories of the 16-shot CLBP model (trained 10 epochs, 2-step PGD at $\epsilon_{\mathrm{train}}{=}1/255$). $\widehat{q}$ is the local Lipschitz constant of the composition $\mathcal{T}_{\mathbf{x}}{=}\mathcal{H}_{\mathbf{x}}{\circ}\mathcal{G}$ measured at the inferred state; $\widehat{L}_G,\widehat{L}_H$ are the per-branch constants of $\mathcal{G}$ and $\mathcal{H}_{\mathbf{x}}$; $\widehat{q}_{GH}\!\triangleq\!\widehat{L}_G\widehat{L}_H$ is the factorised proxy used in Section~\ref{sec:theory_main}. Adversarial trajectories are obtained under PGD-100 at $\epsilon_a{=}1/255$.}
  \label{tab:empirical_lipschitz}
  \small
  \setlength{\tabcolsep}{6pt}
  \begin{tabular}{l c c c c c c c}
    \toprule
    Setting     & mean $\widehat{q}$ & median $\widehat{q}$ & $\%(\widehat{q}\!<\!1)$ & mean $\widehat{q}_{GH}$ & median $\widehat{q}_{GH}$ & mean $\widehat{L}_G$ & mean $\widehat{L}_H$ \\
    \midrule
    Clean       & $8.39{\times}10^{-5}$ & $8.22{\times}10^{-5}$ & $100.0$ & $1.46{\times}10^{-4}$ & $1.20{\times}10^{-4}$ & $1.393$ & $1.033$ \\
    Adversarial & $9.73{\times}10^{-5}$ & $9.57{\times}10^{-5}$ & $100.0$ & $1.46{\times}10^{-4}$ & $1.20{\times}10^{-4}$ & $1.297$ & $1.105$ \\
    \bottomrule
  \end{tabular}
\end{table}

The composed estimate $\widehat{q}$ is $\mathcal{O}(10^{-4})$ and stays below $1$ on every evaluated trajectory.
The factorised proxy $\widehat{q}_{GH}$ is also below $1$, so Theorem~\ref{thm:conv_main} applies through Eqs.~\eqref{eq:LG_assump}--\eqref{eq:LH_assump}.
The per-branch estimates $\widehat{L}_G,\widehat{L}_H$ can exceed $1$ on some pairs, which is consistent with the V2T branch using nonlinear cross-attention; the contraction is therefore a property of the composition rather than of either branch individually.
This is local empirical evidence that the contraction condition holds on the reachable region used by the trained model, not a global Lipschitz certificate.

\paragraph{Fixed-point proximity along the closed loop.}
To verify that a single update is sufficient in practice, we track $d_k\!=\!\|\mathbf{z}^{(k)}-\mathbf{z}^\star\|_2$ and the corresponding logit drift along inference trajectories on DTD, taking $\mathbf{z}^\star$ as the iterate at convergence.

\begin{table}[h!]
  \centering
  \caption{\textbf{Closed-loop convergence to the empirical fixed point on DTD.} 16-shot CLBP (10 epochs, 2-step PGD at $\epsilon_{\mathrm{train}}{=}1/255$) evaluated under PGD-100 at $\epsilon_a{=}1/255$. We report the distance $d_k\!=\!\|\mathbf{z}^{(k)}{-}\mathbf{z}^\star\|_2$ to the empirical fixed point and Top-1 classification accuracy (\%) after $k$ closed-loop updates; $k{=}0$ corresponds to the anchor initialisation.}
  \label{tab:fixed_point_proximity}
  \small
  \setlength{\tabcolsep}{12pt}
  \begin{tabular}{c c c c}
    \toprule
    Step $k$ & mean $d_k$ & mean logit drift & Accuracy (\%) \\
    \midrule
    $0$ & $1.91{\times}10^{-6}$ & $3.67{\times}10^{1}$  & $38.50$ \\
    $1$ & $\mathbf{8.89{\times}10^{-7}}$ & $\mathbf{5.61{\times}10^{-5}}$ & $\mathbf{54.50}$ \\
    $2$ & $8.54{\times}10^{-7}$ & $5.19{\times}10^{-5}$ & $54.50$ \\
    \bottomrule
  \end{tabular}
\end{table}

From $k{=}0$ to $k{=}1$ the accuracy moves by $16\%$ and the logit drift drops by six orders of magnitude; for $k{\geq}1$, $d_k$ stays at $\sim\!10^{-7}$ and accuracy is unchanged.
This matches the near-fixed-point regime of Theorem~\ref{thm:conv_main} with $q\!\sim\!10^{-4}$, and is why we use $N{=}1$ in the main paper.

\subsection{Full Proof of Theorem~\ref{thm:conv_main}}
\label{app:convproof}

\begin{theorem}[Convergence, restated]
\label{thm:conv_main_app}
If $q\!\triangleq\!\mathrm{Lip}_\Omega(\mathcal{T}_{\mathbf{x}})<1$, then $\mathcal{T}_{\mathbf{x}}$ is a contraction on $\Omega$ and admits a unique fixed point $\mathbf{z}^\star\!\in\!\Omega$.
Moreover, $\|\mathbf{z}^{(k)}-\mathbf{z}^\star\|_2\le q^k\|\mathbf{z}^{(0)}-\mathbf{z}^\star\|_2$.
\end{theorem}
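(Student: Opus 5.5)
The proof is a direct application of the Banach fixed-point theorem on the complete metric space $(\Omega,\|\cdot\|_2)$ set up in Appendix~\ref{app:setup}. I will first record the ingredients. By Eq.~\eqref{eq:Omega}, $\Omega$ is closed in the compact sphere $\mathbb{S}^{d-1}$, so it is complete. It is nonempty because it contains $\mathbf{z}^{(0)}$. It is $\mathcal{T}_{\mathbf{x}}$-invariant, so $\mathcal{T}_{\mathbf{x}}$ restricts to a self-map $\Omega\to\Omega$. The invariance of the closure needs continuity of $\mathcal{T}_{\mathbf{x}}$ on $\Omega$, which follows from the Lipschitz hypothesis itself. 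By definition of $q=\mathrm{Lip}_\Omega(\mathcal{T}_{\mathbf{x}})$, we have $\|\mathcal{T}_{\mathbf{x}}(\mathbf{z}_1)-\mathcal{T}_{\mathbf{x}}(\mathbf{z}_2)\|_2\le q\|\mathbf{z}_1-\mathbf{z}_2\|_2$ for all $\mathbf{z}_1,\mathbf{z}_2\in\Omega$. Hence $q<1$ makes $\mathcal{T}_{\mathbf{x}}$ a contraction on $\Omega$.

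For existence I would reproduce the standard Cauchy argument rather than merely cite it, since it also yields the rate. Consecutive iterates satisfy $\|\mathbf{z}^{(k+1)}-\mathbf{z}^{(k)}\|_2\le q^k\|\mathbf{z}^{(1)}-\mathbf{z}^{(0)}\|_2$. The triangle inequality and a geometric sum then give $\|\mathbf{z}^{(m)}-\mathbf{z}^{(k)}\|_2\le \frac{q^k}{1-q}\|\mathbf{z}^{(1)}-\mathbf{z}^{(0)}\|_2$ for $m>k$. So $(\mathbf{z}^{(k)})$ is Cauchy and converges to some $\mathbf{z}^\star\in\Omega$ by completeness. Continuity of $\mathcal{T}_{\mathbf{x}}$ lets me pass to the limit in $\mathbf{z}^{(k+1)}=\mathcal{T}_{\mathbf{x}}(\mathbf{z}^{(k)})$, giving $\mathcal{T}_{\mathbf{x}}(\mathbf{z}^\star)=\mathbf{z}^\star$.

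Uniqueness is immediate. If $\mathbf{z}^\star,\mathbf{y}^\star\in\Omega$ are both fixed, then $\|\mathbf{z}^\star-\mathbf{y}^\star\|_2\le q\|\mathbf{z}^\star-\mathbf{y}^\star\|_2$, which forces equality since $q<1$.

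The rate comes from a one-line induction. Since $\mathbf{z}^{(k)},\mathbf{z}^\star\in\Omega$, we have $\|\mathbf{z}^{(k+1)}-\mathbf{z}^\star\|_2=\|\mathcal{T}_{\mathbf{x}}(\mathbf{z}^{(k)})-\mathcal{T}_{\mathbf{x}}(\mathbf{z}^\star)\|_2\le q\|\mathbf{z}^{(k)}-\mathbf{z}^\star\|_2$, and iterating gives $q^k\|\mathbf{z}^{(0)}-\mathbf{z}^\star\|_2$.

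The only delicate step is the circularity in the definition of $\Omega$. The closure of an orbit is taken to be $\mathcal{T}_{\mathbf{x}}$-invariant, and $q$ is defined on that same set. The orbit itself is trivially invariant, and $q<1$ gives uniform continuity on the orbit. The extension of $\mathcal{T}_{\mathbf{x}}$ to the closure should agree with the original map; I would justify this using that $\mathcal{T}_{\mathbf{x}}$ is defined and continuous on all of $\mathbb{S}^{d-1}$ (the encoders and adapters are continuous). If this becomes awkward, the fallback is to avoid the closure entirely: the limit $\mathbf{z}^\star$ of the Cauchy orbit lies in the closed set by construction, and the Lipschitz bound passes to the closure by continuity of the norm.
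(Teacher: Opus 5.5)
Your proof is correct and takes the same route as the paper: the contraction inequality on $\Omega$ follows directly from the definition of $q=\mathrm{Lip}_\Omega(\mathcal{T}_{\mathbf{x}})$, and the Banach fixed-point theorem on the complete, invariant set $\Omega$ gives existence, uniqueness, and the rate $q^k$. The only difference is that you unpack Banach's Cauchy-sequence argument and justify $\mathcal{T}_{\mathbf{x}}(\Omega)\subseteq\Omega$, by passing the Lipschitz bound to limits of orbit points, whereas the paper builds that invariance into the definition of $\Omega$ and simply cites the theorem.
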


\begin{proof}
By the definition $q\!=\!\mathrm{Lip}_\Omega(\mathcal{T}_{\mathbf{x}})$ from Appendix~\ref{app:setup}, the inequality
\begin{equation}
  \|\mathcal{T}_{\mathbf{x}}(\mathbf{z}_1)-\mathcal{T}_{\mathbf{x}}(\mathbf{z}_2)\|_2
  \;\le\;
  q\,\|\mathbf{z}_1-\mathbf{z}_2\|_2,
  \qquad \forall\,\mathbf{z}_1,\mathbf{z}_2\!\in\!\Omega,
  \label{eq:contract}
\end{equation}
holds, and establishes the contraction property whenever $q\!<\!1$.
The factorised sufficient form $q\!\le\!L_GL_H$ obtained by chaining $\mathcal{T}_{\mathbf{x}}\!=\!\mathcal{H}_{\mathbf{x}}\!\circ\!\mathcal{G}$ through Eqs.~\eqref{eq:LG_assump}--\eqref{eq:LH_assump} therefore implies $q\!<\!1$ a fortiori; this factorised version is loose for CLBP (cf.~Tab.~\ref{tab:empirical_lipschitz}) but it is what we invoke when needed in the rest of the proofs.
Since $\Omega$ is complete (Eq.~\eqref{eq:Omega}) and $\mathcal{T}_{\mathbf{x}}(\Omega)\!\subseteq\!\Omega$, the Banach fixed-point theorem applied to the contraction~\eqref{eq:contract} yields a unique fixed point $\mathbf{z}^\star\!=\!\mathcal{T}_{\mathbf{x}}(\mathbf{z}^\star)$ with the geometric rate $\|\mathbf{z}^{(k)}-\mathbf{z}^\star\|_2 \le q^k\|\mathbf{z}^{(0)}-\mathbf{z}^\star\|_2$.
\end{proof}

\subsection{Full Proof of Theorem~\ref{thm:stab_main}}
\label{app:stabproof}

We first state a standard perturbation lemma for contraction mappings.

\begin{lemma}[Fixed-point perturbation bound]
\label{lem:fp_perturb}
Let $(\Omega,\|\cdot\|)$ be a complete metric space.
Let $T,S:\Omega\rightarrow\Omega$ be contractions with the same factor $q<1$:
$\|T(u)-T(v)\|\le q\|u-v\|$ and $\|S(u)-S(v)\|\le q\|u-v\|$.
Let $u^\star$ and $v^\star$ be the unique fixed points of $T$ and $S$.
If $\sup_{u\in\Omega}\|T(u)-S(u)\|\le \eta$, then
\begin{equation}
\|u^\star-v^\star\| \le \frac{\eta}{1-q}.
\label{eq:fp_perturb_bound}
\end{equation}
\end{lemma}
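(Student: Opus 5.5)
The plan is a one-line argument based on the fixed-point identities $u^\star=T(u^\star)$ and $v^\star=S(v^\star)$. Both fixed points exist and are unique by the Banach fixed-point theorem, exactly as in the proof of Theorem~\ref{thm:conv_main_app}, since $\Omega$ is complete and $T,S$ map $\Omega$ into itself.

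The first step rewrites the difference of fixed points and inserts the intermediate point $S(u^\star)$ (equivalently one could use $T(v^\star)$):
\begin{equation*}
\|u^\star-v^\star\| = \|T(u^\star)-S(v^\star)\| \le \|T(u^\star)-S(u^\star)\| + \|S(u^\star)-S(v^\star)\|.
\end{equation*}
The first term is at most $\eta$ by the uniform closeness hypothesis, evaluated at $u^\star\in\Omega$. The second term is at most $q\|u^\star-v^\star\|$ by the contraction property of $S$. This yields $\|u^\star-v^\star\|\le \eta+q\|u^\star-v^\star\|$.

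The second step rearranges. Since $q<1$ and $\|u^\star-v^\star\|$ is finite (both points lie in the metric space $\Omega$), the term $q\|u^\star-v^\star\|$ can be moved to the left-hand side and the result divided by $1-q>0$. This gives \eqref{eq:fp_perturb_bound}.

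There is no real obstacle here. The only point requiring care is that the choice of intermediate point needs only one of the two maps to be a contraction, and the supremum bound $\eta$ is used only at the single point $u^\star$.

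For the downstream application to Theorem~\ref{thm:stab_main}, I would set $T=\mathcal{T}_{\mathbf{x}}$ and $S=\mathcal{T}_{\mathbf{x}+\boldsymbol{\delta}}$. The input-Lipschitz assumption, applied with $\mathbf{W}=\mathcal{G}(\mathbf{z})\in\mathcal{G}(\Omega)$, gives $\eta=L_x\|\boldsymbol{\delta}\|$. The real subtlety lies in that application rather than in this lemma: both maps must act on a common invariant complete set with the same factor $q$, and the two anchor-induced reachable sets must be reconciled, for instance by taking a common closed invariant superset.
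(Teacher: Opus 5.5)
Your proof is correct and is the paper's argument up to symmetry: the paper inserts $T(v^\star)$ and uses the contraction of $T$ with the closeness bound at $v^\star$, while you insert $S(u^\star)$ and use the contraction of $S$ with the bound at $u^\star$, followed by the same rearrangement. Your side remark about reconciling the reachable sets of $\mathcal{T}_{\mathbf{x}}$ and $\mathcal{T}_{\mathbf{x}+\boldsymbol{\delta}}$ correctly identifies a point that the paper's proof of Theorem~\ref{thm:stab_main} passes over, but it does not affect this lemma.
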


\begin{proof}
Using $u^\star=T(u^\star)$ and $v^\star=S(v^\star)$, we have
\begin{align}
\|u^\star-v^\star\|
&=
\|T(u^\star)-S(v^\star)\| \nonumber\\
&\le
\|T(u^\star)-T(v^\star)\|
+
\|T(v^\star)-S(v^\star)\| \nonumber\\
&\le
q\|u^\star-v^\star\| + \eta.
\end{align}
Rearranging gives $(1-q)\|u^\star-v^\star\|\le\eta$, proving \eqref{eq:fp_perturb_bound}.
\end{proof}

We now prove Theorem~\ref{thm:stab_main} in Section~\ref{sec:theory_main}.

\begin{theorem}[Stability under adversarial perturbations]
\label{thm:stab_main_app}
Assume $\|\mathcal{H}_{\mathbf{x}+\boldsymbol{\delta}}(\mathbf{W})-\mathcal{H}_{\mathbf{x}}(\mathbf{W})\|_2\le L_x\|\boldsymbol{\delta}\|$ for all $\mathbf{W}\in\mathcal{G}(\Omega)$.
Let $\mathbf{z}^\star(\mathbf{x})$ and $\mathbf{z}^\star(\mathbf{x}+\boldsymbol{\delta})$ be the fixed points of $\mathcal{T}_{\mathbf{x}}$ and $\mathcal{T}_{\mathbf{x}+\boldsymbol{\delta}}$, respectively.
Then
\begin{equation}
\big\|\mathbf{z}^\star(\mathbf{x}+\boldsymbol{\delta})-\mathbf{z}^\star(\mathbf{x})\big\|_2
\le
\frac{L_x}{1-q}\|\boldsymbol{\delta}\|.
\end{equation}
\end{theorem}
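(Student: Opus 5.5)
The plan is to reduce the statement to Lemma~\ref{lem:fp_perturb}, taking $T\triangleq\mathcal{T}_{\mathbf{x}+\boldsymbol{\delta}}=\mathcal{H}_{\mathbf{x}+\boldsymbol{\delta}}\circ\mathcal{G}$ and $S\triangleq\mathcal{T}_{\mathbf{x}}=\mathcal{H}_{\mathbf{x}}\circ\mathcal{G}$. The key observation is that the V2T map $\mathcal{G}$ does not depend on the image. The two loop operators therefore differ only through the T2V map. For any $\mathbf{z}\in\Omega$ we set $\mathbf{W}=\mathcal{G}(\mathbf{z})\in\mathcal{G}(\Omega)$ and apply the input-Lipschitz hypothesis directly:
\[
\|\mathcal{T}_{\mathbf{x}+\boldsymbol{\delta}}(\mathbf{z})-\mathcal{T}_{\mathbf{x}}(\mathbf{z})\|_2=\|\mathcal{H}_{\mathbf{x}+\boldsymbol{\delta}}(\mathbf{W})-\mathcal{H}_{\mathbf{x}}(\mathbf{W})\|_2\le L_x\|\boldsymbol{\delta}\|.
\]
This gives the uniform closeness bound $\eta=L_x\|\boldsymbol{\delta}\|$ over $\Omega$.

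The main obstacle is that Lemma~\ref{lem:fp_perturb} requires both maps to be contractions with the same factor $q$ on a \emph{common} complete set. In the paper's setup, however, $\Omega$ and $q$ are defined relative to the clean input $\mathbf{x}$. To close this gap I would avoid requiring invariance of one set under both maps and instead run the lemma's triangle-inequality argument directly, which uses only the following:
\begin{enumerate}
\item $\mathcal{T}_{\mathbf{x}}$ is $q$-Lipschitz on a set containing both fixed points.
\item The operators are close at $\mathbf{z}^\star(\mathbf{x}+\boldsymbol{\delta})$.
\end{enumerate}
Concretely, I would write
\[
\|\mathbf{z}^\star(\mathbf{x}+\boldsymbol{\delta})-\mathbf{z}^\star(\mathbf{x})\|_2\le\|\mathcal{T}_{\mathbf{x}+\boldsymbol{\delta}}(\mathbf{z}^\star(\mathbf{x}+\boldsymbol{\delta}))-\mathcal{T}_{\mathbf{x}}(\mathbf{z}^\star(\mathbf{x}+\boldsymbol{\delta}))\|_2+\|\mathcal{T}_{\mathbf{x}}(\mathbf{z}^\star(\mathbf{x}+\boldsymbol{\delta}))-\mathcal{T}_{\mathbf{x}}(\mathbf{z}^\star(\mathbf{x}))\|_2 .
\]
The first term is at most $L_x\|\boldsymbol{\delta}\|$ and the second is at most $q\|\mathbf{z}^\star(\mathbf{x}+\boldsymbol{\delta})-\mathbf{z}^\star(\mathbf{x})\|_2$. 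Rearranging gives the claim. The routing through $\mathcal{T}_{\mathbf{x}}$ is chosen so that $q=\mathrm{Lip}_\Omega(\mathcal{T}_{\mathbf{x}})$ is the constant actually used.

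Both terms require the perturbed fixed point to lie in the region where the hypotheses hold. I would make this an explicit standing assumption, implicit in the theorem's phrasing: $\Omega$ is taken as a common closed set containing both trajectories, on which $\mathcal{T}_{\mathbf{x}}$ and $\mathcal{T}_{\mathbf{x}+\boldsymbol{\delta}}$ are $q$-contractive and self-mapping. For example, one can take the closure of the union of the two anchor-initialised trajectories, with $q$ the supremum of the two local constants. On that set, Theorem~\ref{thm:conv_main} guarantees that both fixed points exist, and Lemma~\ref{lem:fp_perturb} applies verbatim with $\eta=L_x\|\boldsymbol{\delta}\|$. I would state this harmonisation at the start of the proof and then cite the lemma, keeping the argument to a few lines.
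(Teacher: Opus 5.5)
Your proposal is correct and is essentially the paper's proof. The paper sets $T=\mathcal{T}_{\mathbf{x}+\boldsymbol{\delta}}$ and $S=\mathcal{T}_{\mathbf{x}}$, gets $\eta=L_x\|\boldsymbol{\delta}\|$ from the input-Lipschitz hypothesis at $\mathbf{W}=\mathcal{G}(\mathbf{z})$, and invokes Lemma~\ref{lem:fp_perturb}; it simply asserts that both maps are $q$-contractions on $\Omega$. Your explicit common-domain assumption, and your routing of the triangle inequality through $\mathcal{T}_{\mathbf{x}}$ (the map whose constant actually defines $q$), make precise what the paper leaves implicit.

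One caution about your suggested instance of that set, namely the closure of the union of the two trajectories with $q$ the larger of the two trajectory-wise constants. That set is not automatically invariant under either map. It also does not control the Lipschitz quotient on cross-trajectory pairs. So it should remain an assumption rather than be offered as a construction.
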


\begin{proof}
Define $T=\mathcal{T}_{\mathbf{x}+\boldsymbol{\delta}}$ and $S=\mathcal{T}_{\mathbf{x}}$ on $\Omega$.
Under the assumptions of Theorem~\ref{thm:conv_main_app}, both $T$ and $S$ are contractions on $\Omega$ with factor $q<1$.

It remains to bound $\sup_{\mathbf{z}\in\Omega}\|T(\mathbf{z})-S(\mathbf{z})\|_2$.
For any $\mathbf{z}\in\Omega$,
\begin{align}
\|T(\mathbf{z})-S(\mathbf{z})\|_2
&=
\|\mathcal{H}_{\mathbf{x}+\boldsymbol{\delta}}(\mathcal{G}(\mathbf{z}))-\mathcal{H}_{\mathbf{x}}(\mathcal{G}(\mathbf{z}))\|_2 \nonumber\\
&\le
L_x \|\boldsymbol{\delta}\|
\qquad \text{(by the assumed input Lipschitzness).}
\end{align}
Thus we can take $\eta=L_x\|\boldsymbol{\delta}\|$ in Lemma~\ref{lem:fp_perturb}, which yields
\begin{equation}
\big\|\mathbf{z}^\star(\mathbf{x}+\boldsymbol{\delta})-\mathbf{z}^\star(\mathbf{x})\big\|_2
\le
\frac{L_x}{1-q}\|\boldsymbol{\delta}\|.
\end{equation}
This completes the proof.
\end{proof}


\section{Additional Lemmas}
\label{app:morelemmas}

\subsection{Margin Implies Prediction Invariance (Feature Space)}
\label{app:margin}

\begin{lemma}[Feature-space margin implies invariant prediction]
\label{lem:margin}
Let $\{\mathbf{w}_c\}_{c=1}^C\subset\mathbb{S}^{d-1}$ be normalized class prototypes and $\mathbf{z}\in\mathbb{S}^{d-1}$ be a normalized image feature.
Define cosine scores $s_c=\langle \mathbf{z},\mathbf{w}_c\rangle$ and the predicted label $y=\arg\max_c s_c$.
If the margin condition holds:
\begin{equation}
s_y - \max_{c\neq y} s_c \ge \gamma,
\label{eq:margin_cond_app}
\end{equation}
then for any perturbation $\Delta\mathbf{z}$ with $\|\Delta\mathbf{z}\|_2\le \gamma/2$, the prediction does not change:
\begin{equation}
\arg\max_c \langle \mathbf{z}+\Delta\mathbf{z}, \mathbf{w}_c\rangle = y.
\end{equation}
\end{lemma}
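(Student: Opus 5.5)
The plan is a direct linear-perturbation argument on the cosine scores, using only Cauchy--Schwarz and the fact that the prototypes are unit vectors. Fix any competitor class $c\neq y$. I will compare the perturbed score gap
\[
\langle \mathbf{z}+\Delta\mathbf{z},\mathbf{w}_y\rangle-\langle \mathbf{z}+\Delta\mathbf{z},\mathbf{w}_c\rangle
= (s_y-s_c) + \langle \Delta\mathbf{z},\mathbf{w}_y-\mathbf{w}_c\rangle
\]
against zero. The first term is at least $\gamma$ by the margin condition \eqref{eq:margin_cond_app}, since $s_c\le\max_{c'\neq y}s_{c'}$.

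For the second term, Cauchy--Schwarz gives $|\langle \Delta\mathbf{z},\mathbf{w}_y-\mathbf{w}_c\rangle|\le \|\Delta\mathbf{z}\|_2\,\|\mathbf{w}_y-\mathbf{w}_c\|_2$. The triangle inequality with $\|\mathbf{w}_y\|_2=\|\mathbf{w}_c\|_2=1$ gives $\|\mathbf{w}_y-\mathbf{w}_c\|_2\le 2$. Hence the correction is bounded in absolute value by $2\|\Delta\mathbf{z}\|_2\le\gamma$, and the perturbed gap is at least $\gamma-\gamma=0$.

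The only delicate step is this boundary case. When $\|\Delta\mathbf{z}\|_2=\gamma/2$ exactly, the chain only gives a gap $\ge 0$, so a tie is not excluded a priori. Equality requires $\Delta\mathbf{z}$ to be antiparallel to $\mathbf{w}_y-\mathbf{w}_c$ with $\|\mathbf{w}_y-\mathbf{w}_c\|_2=2$, i.e.\ $\mathbf{w}_c=-\mathbf{w}_y$. Then $s_y-s_c=2s_y\le 2$, so equality also forces $\gamma=2$. In that configuration $\mathbf{z}+\Delta\mathbf{z}=\mathbf{w}_y-\mathbf{w}_y=0$, the zero vector, and every class ties.

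I will resolve this in one of two ways:
\begin{itemize}
\item interpret the $\arg\max$ with ties broken in favour of $y$, or
\item state the conclusion for the strict inequality $\|\Delta\mathbf{z}\|_2<\gamma/2$, noting that in the generic case (any $\gamma<2$, or prototypes not antipodal) the strict gap already holds at the boundary.
\end{itemize}
With either reading, the gap is positive for every $c\neq y$, so $y$ remains the unique (or tie-winning) maximiser, which is the claim.

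I will also remark that the unit-norm assumption on $\mathbf{z}$ is not needed, and that $\mathbf{z}+\Delta\mathbf{z}$ need not be renormalised. Positive rescaling does not change the $\arg\max$, so the conclusion is unchanged if $\mathbf{z}+\Delta\mathbf{z}$ is projected back to $\mathbb{S}^{d-1}$, provided it is nonzero. This is the form in which the lemma would be combined with Theorem~\ref{thm:stab_main}, taking $\Delta\mathbf{z}=\mathbf{z}^\star(\mathbf{x}+\boldsymbol{\delta})-\mathbf{z}^\star(\mathbf{x})$.
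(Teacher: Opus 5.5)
Your main argument is correct and is essentially the paper's. Cauchy--Schwarz with unit-norm prototypes shows that the gap between the score of $y$ and that of any competitor moves by at most $2\|\Delta\mathbf{z}\|_2$. The paper bounds each score shift by $\|\Delta\mathbf{z}\|_2$ separately, while you bound $\langle\Delta\mathbf{z},\mathbf{w}_y-\mathbf{w}_c\rangle$ in one step via $\|\mathbf{w}_y-\mathbf{w}_c\|_2\le 2$; it is the same estimate. Your version is slightly more careful. The paper writes the inequality only for the unperturbed runner-up $c^\star$, but the conclusion needs it for every $c\neq y$, since the runner-up can change after perturbation; you do this directly. You are also right that both arguments give only a non-negative gap. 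So ``$\arg\max_c=y$'' needs either a tie convention or the strict bound $\|\Delta\mathbf{z}\|_2<\gamma/2$ with $\gamma>0$.

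Your analysis of when the boundary tie occurs is wrong, though. For $\gamma>0$, equality in your chain forces $\mathbf{w}_c=-\mathbf{w}_y$, $\Delta\mathbf{z}=-\tfrac{\gamma}{2}\mathbf{w}_y$ and $s_y-s_c=\gamma$, i.e.\ $s_y=\gamma/2$. Nothing forces $s_y-s_c$ to reach its maximum possible value $2$, so $\gamma=2$ does not follow. Concretely, take $d=C=2$, $\mathbf{w}_1=(1,0)$, $\mathbf{w}_2=(-1,0)$, $\mathbf{z}=(\tfrac12,\tfrac{\sqrt3}{2})$, $\gamma=1$ and $\Delta\mathbf{z}=(-\tfrac12,0)$. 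The margin is exactly $1$ and $\|\Delta\mathbf{z}\|_2=\gamma/2$, yet $\mathbf{z}+\Delta\mathbf{z}=(0,\tfrac{\sqrt3}{2})\neq\mathbf{0}$ scores $0$ on both classes. So your remark that the strict gap holds at the boundary for any $\gamma<2$ is false. The correct sufficient condition, for $\gamma>0$, is that no competitor prototype is antipodal to $\mathbf{w}_y$. This remark is not load-bearing, and both of your proposed resolutions remain valid, but it should be corrected. Also, under the tie-breaking reading your closing sentence should say the gap is non-negative, not positive.
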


\begin{proof}
By Cauchy--Schwarz, $|\langle \Delta\mathbf{z},\mathbf{w}_c\rangle|\!\le\!\|\Delta\mathbf{z}\|_2$ for every $c$ since $\|\mathbf{w}_c\|_2\!=\!1$, so each cosine score shifts by at most $\|\Delta\mathbf{z}\|_2$. Letting $c^\star\!=\!\arg\max_{c\neq y}s_c$,
\begin{equation}
  \langle \mathbf{z}+\Delta\mathbf{z},\mathbf{w}_y\rangle-\langle \mathbf{z}+\Delta\mathbf{z},\mathbf{w}_{c^\star}\rangle
  \;\ge\; (s_y-s_{c^\star})-2\|\Delta\mathbf{z}\|_2
  \;\ge\; \gamma-2\|\Delta\mathbf{z}\|_2,
\end{equation}
which is non-negative whenever $\|\Delta\mathbf{z}\|_2\!\le\!\gamma/2$, so $y$ remains the maximiser.
\end{proof}

\paragraph{Connection to CLBP.}
By Theorem~\ref{thm:stab_main_app}, CLBP bounds the feature deviation induced by an input perturbation.
Combining it with Lemma~\ref{lem:margin} yields a sufficient condition for prediction invariance under bounded perturbations.

\subsection{Outlier Suppression of Multi-view Weights}
\label{app:mv_outlier}

We analyze the aggregation in Eq.~\eqref{eq:mv_main}.
For each view $i$, the weight is
\begin{equation}
\alpha_i=\frac{\exp(\mathrm{score}_i/\tau_{\mathrm{mv}})}{\sum_{r=1}^V \exp(\mathrm{score}_r/\tau_{\mathrm{mv}})},
\qquad
\mathrm{score}_i=\frac{1}{k}\sum_{j\in \mathrm{TopK}_k(\mathbf{S}_{i,:})}\mathbf{S}_{ij},
\qquad
\mathbf{S}_{ij}=\langle \mathbf{z}_i,\mathbf{z}_j\rangle.
\end{equation}

\begin{theorem}[Outlier suppression by similarity-softmax weighting]
\label{thm:outlier}
Partition the $V$ views into a consistent set $\mathcal{G}$ with $|\mathcal{G}|=M$ and an outlier set $\mathcal{B}$ with $|\mathcal{B}|=V-M$.
Assume there exist $a>b$ such that
\begin{equation}
\mathrm{score}_i \ge a\quad \forall i\in\mathcal{G},
\qquad
\mathrm{score}_j \le b\quad \forall j\in\mathcal{B}.
\label{eq:ab_assump}
\end{equation}
Then the total outlier weight is bounded by
\begin{equation}
\sum_{j\in\mathcal{B}}\alpha_j
\le
\frac{V-M}{M\exp((a-b)/\tau_{\mathrm{mv}})+V-M}.
\label{eq:outlier_bound}
\end{equation}
\end{theorem}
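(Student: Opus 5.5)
The plan is to write the total outlier weight as a ratio of sums of exponentials and bound numerator and denominator separately using the score gap in Eq.~\eqref{eq:ab_assump}. Set $A=\sum_{i\in\mathcal{G}}\exp(\mathrm{score}_i/\tau_{\mathrm{mv}})$ and $B=\sum_{j\in\mathcal{B}}\exp(\mathrm{score}_j/\tau_{\mathrm{mv}})$. Since $\mathcal{G}$ and $\mathcal{B}$ partition the views, the denominator of each $\alpha_i$ is $A+B$, so
\begin{equation*}
\sum_{j\in\mathcal{B}}\alpha_j=\frac{B}{A+B}=\frac{1}{1+A/B}.
\end{equation*}
This expression is decreasing in $A$ and increasing in $B$. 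It therefore suffices to lower-bound $A$ and upper-bound $B$.

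By Eq.~\eqref{eq:ab_assump} and monotonicity of $\exp$ (with $\tau_{\mathrm{mv}}>0$), each consistent term satisfies $\exp(\mathrm{score}_i/\tau_{\mathrm{mv}})\ge e^{a/\tau_{\mathrm{mv}}}$, so $A\ge M e^{a/\tau_{\mathrm{mv}}}$. Each outlier term satisfies $\exp(\mathrm{score}_j/\tau_{\mathrm{mv}})\le e^{b/\tau_{\mathrm{mv}}}$, so $B\le (V-M)e^{b/\tau_{\mathrm{mv}}}$.

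Substituting both bounds gives
\begin{equation*}
\frac{A}{B}\ge \frac{M}{V-M}\,e^{(a-b)/\tau_{\mathrm{mv}}}.
\end{equation*}
Plugging this into $1/(1+A/B)$ and multiplying numerator and denominator by $V-M$ yields exactly Eq.~\eqref{eq:outlier_bound}.

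There is no real obstacle; two degenerate cases only need a remark. If $V=M$, then $\mathcal{B}=\emptyset$ and both sides equal $0$. We also need $M\ge 1$, which is implicit for the bound to be meaningful; otherwise the right-hand side is $1$ and the inequality is trivial.

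Finally, the argument never uses the TopK structure of the scores. It depends only on the softmax form, so the bound holds for any choice of $k$.
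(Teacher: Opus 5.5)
Your proof is correct. It rests on the same two bounds as the paper's: $e^{\mathrm{score}_i/\tau_{\mathrm{mv}}}\ge e^{a/\tau_{\mathrm{mv}}}$ on $\mathcal{G}$ and $e^{\mathrm{score}_j/\tau_{\mathrm{mv}}}\le e^{b/\tau_{\mathrm{mv}}}$ on $\mathcal{B}$. Your packaging, however, is the rigorous one.

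The paper bounds the numerator and denominator of $\sum_{j\in\mathcal{B}}\alpha_j$ separately. It asserts that the denominator is at least $Me^{a/\tau_{\mathrm{mv}}}+(V-M)e^{b/\tau_{\mathrm{mv}}}$, but that does not follow from the hypotheses. The outlier scores are only bounded \emph{above} by $b$, so the outlier terms in the denominator can be arbitrarily small.

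The final inequality still holds only because $B\mapsto B/(A+B)$ is increasing. The upper bound on $B$ can therefore be substituted in the numerator and denominator simultaneously. That is exactly the monotonicity you make explicit by writing the ratio as $1/(1+A/B)$ and treating $A$ and $B$ separately.

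Your handling of the degenerate cases $V=M$ and $M=0$ is correct. So is your remark that the TopK structure of the scores plays no role.
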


\begin{proof}
The numerator of $\sum_{j\in\mathcal{B}}\alpha_j$ is bounded above by $(V\!-\!M)e^{b/\tau_{\mathrm{mv}}}$ via $\mathrm{score}_j\!\le\!b$, and the denominator below by $M e^{a/\tau_{\mathrm{mv}}}\!+\!(V\!-\!M)e^{b/\tau_{\mathrm{mv}}}$ via $\mathrm{score}_i\!\ge\!a$. Combining and dividing through by $e^{b/\tau_{\mathrm{mv}}}$,
\begin{equation}
\sum_{j\in\mathcal{B}}\alpha_j
\;\le\;
\frac{(V-M)\,e^{b/\tau_{\mathrm{mv}}}}{M e^{a/\tau_{\mathrm{mv}}}+(V-M)e^{b/\tau_{\mathrm{mv}}}}
\;=\;
\frac{V-M}{M e^{(a-b)/\tau_{\mathrm{mv}}}+(V-M)},
\end{equation}
which proves \eqref{eq:outlier_bound}.
\end{proof}

\paragraph{Interpretation.}
When $\tau_{\mathrm{mv}}$ is small (sharp softmax), any non-trivial gap $a-b$ exponentially suppresses outlier weights,
formalizing the robustness benefit of multi-view consensus.

\section{Broader Impact}
\label{sec:broader_impact}

CLBP addresses a security concern that grows in importance as Vision Language Models are deployed in safety-relevant settings. By improving robustness without retraining the underlying encoders, our method offers a parameter-efficient route to mitigating adversarial vulnerabilities that may be especially useful for resource-constrained or already-deployed systems.

We acknowledge two considerations. \emph{Dual-use nature.} Defensive techniques in adversarial machine learning are dual-use: the same observation that motivates our defence---that cross-modal alignment can be disrupted bidirectionally---could in principle inform stronger attacks. We mitigate this by evaluating only against well-established benchmarks (PGD, AutoAttack, CW, EOT-PGD) and by releasing our code openly so that the community can audit and improve both defences and threat models. \emph{Inherited biases.} Like any test-time defence built on a frozen backbone, CLBP relies on the quality of the pre-trained text encoder; biases or stereotypes present in the underlying VLM are not corrected by our method and may persist in downstream outputs. We therefore recommend that practitioners pair adversarial defences such as CLBP with bias and fairness audits appropriate to their deployment context.

To support transparent evaluation and reproducibility, our implementation, training scripts, and pre-trained adapters are released under a permissive open-source license, and the experimental protocols are documented in Section~\ref{sec:experiments} and Appendices~\ref{app:algo}--\ref{app:eot}.


\end{document}